


 \documentclass[review,1p,times]{elsarticle}


\usepackage{amssymb}
%
\sloppy 
\usepackage{mathptmx}      

\DeclareMathAlphabet{\mathcal}{OMS}{cmsy}{m}{n} 
\usepackage{latexsym}
\usepackage{graphicx}
\usepackage{pbox}
\usepackage{amsfonts,amssymb}
\usepackage{amsmath}
\usepackage{amsthm}
\usepackage{changepage}
\usepackage{tikz}
\usepackage{soul}
\usepackage{multirow}
\usepackage{booktabs}
\usepackage{threeparttable}
\usepackage{floatrow}
\floatsetup[table]{capposition=top}
\usepackage{algorithm}
\usepackage{algorithmic}
\usepackage{tabularx}
\usepackage[titletoc,title]{appendix}
\usepackage{hyperref}
\usepackage{pbox}
\usepackage{subfig}
\usepackage{natbib}
\hypersetup{colorlinks,linkcolor={blue},citecolor={blue},urlcolor={blue}}
\usepackage{url}
\urlstyle{same}
\usepackage{bm}
\DeclareMathOperator*{\argmin}{arg\,min}
\newtheorem{proposition}{Proposition}[section]

\usepackage[ampersand]{easylist}


\begin{document}

\begin{frontmatter}



\title{Cross-Domain Structure Preserving Projection for Heterogeneous Domain Adaptation}


\author{Qian Wang,  Toby P. Breckon}

\address{Department of Computer Science, Durham University, UK. \\qian.wang173@hotmail.com, toby.breckon@durham.ac.uk }

\begin{abstract}
Heterogeneous Domain Adaptation (HDA) addresses the transfer learning problems where data from the source and target domains are of different modalities (e.g., texts and images) or feature dimensions (e.g., features extracted with different methods). It is useful for multi-modal data analysis. Traditional domain adaptation algorithms assume that the representations of source and target samples reside in the same feature space, hence are likely to fail in solving the heterogeneous domain adaptation problem. Contemporary state-of-the-art HDA approaches are usually composed of complex optimization objectives for favourable performance and are therefore computationally expensive and less generalizable. 
To address these issues, we propose a novel Cross-Domain Structure Preserving Projection (CDSPP) algorithm for HDA. As an extension of the classic LPP to heterogeneous domains, CDSPP aims to learn domain-specific projections to map sample features from source and target domains into a common subspace such that the class consistency is preserved and data distributions are sufficiently aligned. CDSPP is simple and has deterministic solutions by solving a generalized eigenvalue problem. It is naturally suitable for supervised HDA but has also been extended for semi-supervised HDA where the unlabelled target domain samples are available. Extensive experiments have been conducted on commonly used benchmark datasets (i.e. Office-Caltech, Multilingual Reuters Collection, NUS-WIDE-ImageNet) for HDA as well as the Office-Home dataset firstly introduced for HDA by ourselves due to its significantly larger number of classes than the existing ones (65 vs 10, 6 and 8). 
The experimental results of both supervised and semi-supervised HDA demonstrate the superior performance of our proposed method against contemporary state-of-the-art methods.
\end{abstract}

\begin{keyword}
heterogeneous domain adaptation \sep cross-domain projection\sep image classification \sep text classification

\end{keyword}

\end{frontmatter}


\section{Introduction} \label{sec:intro}
Supervised learning can achieve good performance given considerable amounts of labelled data for training. One essential factor accounting for the recent successes in deep learning and image classification is the ImageNet database which contains more than 14 million hand-annotated images \cite{deng2009imagenet}. However, there exist many tasks in real-world applications where sufficient labelled data are not available, hence the performance of traditional supervised learning approaches can degrade significantly. One promising technique alleviating this problem is transfer learning which aims to transfer knowledge learned from the source domain to the target domain in which labelled data are sparse and expensive to collect \citep{weiss2016survey}. In many scenarios, domain adaptation is required since the data distributions in the source and target domains can be different and the models trained with source domain data are not directly applicable to the target domain \citep{patel2015visual}. 

Since domain adaptation is a promising solution to the training data sparsity issue in many real-world applications, it has been studied in a variety of research tasks including image classification \citep{wang2019unifying}, semantic segmentation \citep{zhao2019multi}, depth estimation \citep{atapour2018real}, speech emotion recognition \citep{zhou2019transferable}, text classification \citep{zhou2019multi} and many others.

Domain adaptation approaches aim to model the domain shift between source and target domains and reduce the discrepancy by aligning the data distributions \citep{wang2019unifying,wang2020unsupervised}. In the scope of classification problems, this is usually boiled down to aligning the marginal and class conditional distributions across domains \citep{wang2018visual,chen2018joint}. However, most existing works are based on the assumption of homogeneity, i.e., the source and target data are represented in the same feature space with unaligned distributions \citep{zhao2019multi,wang2019unifying,zhang2019domain,wang2020unsupervised}. These approaches may not be applicable in situations where the source and target domains are \textit{heterogeneous} in the forms of data modalities (e.g., texts vs images) or representations (e.g., features extracted with different methods).

Attempts have been made to extend the success of domain adaptation approaches to the HDA problems, however, it is non-trivial for the common subspace learning methods due to the heterogeneous feature spaces across the source and target domains. One common solution to such extension is to learn two domain-specific projections instead of one unified projection for the source and target domains in HDA problems \cite{wang2011heterogeneous,li2018heterogeneous}. Nevertheless, there are at least two limitations in these existing methods. One is most of them use Maximum Mean Discrepancy (MMD) as the objective to learn the projection matrices. MMD based objectives have been outperformed by more recent ones based on locality preserving projection \cite{wang2020unsupervised,li2019locality} in homogeneous domain adaptation. In HDA problems, locality preserving objectives have not been well explored despite some attempts in \cite{wang2011heterogeneous,li2018heterogeneous}. In this paper, we present a succinct yet effective algorithm by extending the locality preserving objectives for heterogeneous domain adaptation. The other limitation of existing HDA approaches is the way how they exploit the unlabelled target-domain data are sub-optimal. In our work, we propose a novel selective pseudo-labelling strategy to take advantage of the unlabelled target-domain data. The selection is based on the classification confidence and applies to a variety of classification models (e.g., Nearest Neighbour, SVM and Neural Networks).

Specifically, we address the heterogeneous domain adaptation problem where the source and target data are represented in heterogeneous feature spaces. Following the same spirits of previous domain adaptation approaches \citep{wang2018visual,wang2019unifying,wang2020unsupervised}, we try to learn a common latent subspace where both source and target data can be projected and well aligned in the learnt subspace. Specifically, we learn domain-specific projections using a novel Cross-Domain Structure Preserving Projection (CDSPP) algorithm which is an extension of the classic Locality Preserving Projection (LPP) algorithm \citep{he2004locality}. CDSPP can facilitate class consistency preserving to learn domain-specific projections which can be used to map heterogeneous data representations into a common subspace for recognition. CDSPP is simple yet effective in solving the heterogeneous domain adaptation problem as empirically validated by our experimental results on several benchmark datasets. To take advantage of the unlabelled target-domain data in the semi-supervised HDA setting, a selective pseudo-labelling strategy is employed to progressively optimise the projections and target data label predictions. The contributions of this work can be summarised as follows:
\begin{itemize}
    \item[-] A novel Cross-Domain Structure Preserving Projection algorithm is proposed for heterogeneous domain adaptation and the algorithm has a concise solution by solving a generalized eigenvalue problem;
    \item[-] The proposed CDSPP algorithm is naturally for supervised HDA and we extend it to solve the semi-supervised HDA problems by employing an iterative pseudo-labelling approach;
    \item[-] We validate the effectiveness of the proposed method on several benchmark datasets including the newly introduced Office-Home which contains much more classes than the previously used ones; the experimental results provide evidence our algorithm outperforms prior art.
\end{itemize}
\section{Related Work} \label{sec:related}
Most exiting research in domain adaptation for classification is based on the assumption of homogeneity \cite{wang2020unsupervised,li2019locality,li2020maximum}. The approaches are dedicated to either learning a domain-invariant feature extraction model (e.g., deep CNN \citep{chen2019progressive,zhang2019domain}) or learning a unified feature projection matrix \citep{wang2018visual,wang2019unifying,wang2020unsupervised} for all domains whilst neither of them applies to HDA. In this section, we briefly review related works on heterogeneous domain adaptation.
The existing approaches to HDA can be roughly categorized into \textit{cross-domain mapping} and \textit{common subspace learning}.

\subsection{Cross-Domain Mapping}
Cross-domain mapping approaches learn a projection from the source to the target domain. The projection can be learned for either \textit{feature transformation} \citep{hubert2016learning,shen2018unsupervised} or \textit{model parameter transformation} (e.g., SVM weights \citep{zhou2019multi,mozafari2016svm}). Feature transformation approaches learn a projection to map the source data into the target data by aligning the data distribution \citep{hubert2016learning} or the second-order moment \citep{shen2018unsupervised}. As a result, the transformed source data can help to learn a classifier for the target domain. To avoid mapping a lower-dimensional feature to a higher-dimensional space, PCA is usually employed to learn subspaces for both domains respectively \citep{hubert2016learning} as a pre-processing which can suffer from information loss.

Model parameter transformation approaches focus mainly on SVM classifier weights. For a multi-class classification problem, one-vs-all classifiers are learned for source and target domains using the respective labelled samples. Subsequently, the cross-domain mapping is learned from the paired class-level weight vectors \citep{zhou2019multi,mozafari2016svm}. Since the number of classes is far less than the number of samples, these approaches are more computationally efficient but rely too much on the learned classifiers and overlooked abundant information underlying the data distribution.

\subsection{Common Subspace Learning}
Common subspace learning is a more popular strategy for HDA. It learns domain-specific projections which map source and target domain data into a common subspace. 
To this end, different approaches have been proposed with varying algorithms, e.g., Manifold Alignment \citep{wang2011heterogeneous,li2018transfer,fang2018discriminative,wu2021heterogeneous}, Canonical Correlation Analysis \citep{yan2017learning}, Coding Space Learning \citep{li2017locality,li2018heterogeneous,deng2019multiclass}, Deep Matrix Completion \citep{li2019heterogeneous} and Deep Neural Networks \citep{zhou2019deep,yao2019heterogeneous}.
Despite the diversity of implementation, the main objective of common subspace learning based HDA is similar, i.e., the alignment of the source and target domains. 

To align the distributions, \citep{hubert2016learning,li2017locality,li2018heterogeneous,li2018transfer,li2019heterogeneous}  chose to minimize the Maximum Mean Discrepancy (MMD) in their objectives which, however, can only align the means of domains (for marginal distributions) and the means of classes (for conditional distributions). As a result, the subspace learned via minimizing the MMD is not sufficiently discriminative. One alternative to MMD is the manifold learning using graph Laplacian \citep{wang2011heterogeneous,li2018heterogeneous,li2018transfer}. 

Li et al. \citep{li2013learning} proposed a Heterogeneous Feature Augmentation (HFA) method and its semi-supervised version SHFA by learning domain-specific projections and a classifier (i.e. SVM) simultaneously. However, the computational complexity is $\mathcal{O}(n^3)$, where $n$ is the number of labelled samples and makes it extremely slow when $n$ is large.
Li et al. \citep{li2017locality} learned new feature representations for source and target data by encoding them with a shared codebook which requires the original features have the same dimensions for source and target domains. PCA was employed for this purpose as a pre-processing but can suffer from information loss. Lately, the authors incorporated the learning of two domain-specific projections (in place of PCA) into the coding framework \citep{li2018heterogeneous}. This work is similar to ours in the sense of local consistency using the graph regularization, however, it fails to align cross-domain class consistency due to the use of $k$ nearest neighbours to construct the similarity graph. In our work, the similarity graph is constructed based on class consistency, hence promoting the cross-domain conditional distribution alignment.

Transfer Independently Together (TIT) was proposed in \citep{li2018transfer}. It also learns domain-specific projections to align data distributions in the learned common subspace. The algorithm was based on a collection of tricks including kernel space, MMD, sample reweighting and landmark selection.  In contrast, our solution is concise with one simple objective of cross-domain structure preserving. Recently, Huang et al. \citep{huang2020heterogeneous} proposed a novel algorithm, named heterogeneous discriminative features learning and label propagation (HDL). This algorithm is similar to ours in that both tend to preserve structure information in the learned common subspace. However, different objectives have been formulated. Our algorithm explicitly promotes the intra-class similarity for both within-domain and cross-domain samples, whilst HDL fails to consider the intra-class similarity for samples from the same domain in their formulation. In addition, different strategies of unlabelled target sample exploration were employed in two algorithms.

In summary, although manifold learning has been well studied in HDA, the existing formulations for domain-specific projection learning are either inefficient or ineffective. Our approach solves this issue and addresses the HDA problem with a novel CDSPP algorithm.

\section{Method} \label{sec:method}

To facilitate our presentation, we firstly describe the heterogeneous domain adaptation problem and notations used throughout this paper. 
Given a labelled dataset $\mathcal{D}^s = \{(\bm{x}^s_i,y^s_i)\}, i = 1,2,...,n_s$ from the source domain $\mathcal{S}$, and a labelled dataset $\mathcal{D}^t = \{\bm{x}^t_i,y^t_i\}, i = 1,2,...,n_t$ from the target domain, $\bm{x}^s_i \in \mathbb{R}^{d_s}$ and $\bm{x}^{t}_i \in \mathbb{R}^{d_t}$  represent the feature vectors of $i$-th labelled samples in the source and target domains respectively; $d_s$ and $d_t$ are the dimensionalities of the source and target features; $y^s_i \in \mathcal{Y}$ and $y^t_i\in \mathcal{Y}$ denote the corresponding sample labels; $n_s$ and $n_t$ are the number of source and labelled target samples respectively. Let $\bm{X}^s \in \mathbb{R}^{d_s\times n_s}$ and $\bm{X}^t \in \mathbb{R}^{d_t\times n_t}$ be the feature matrices of labelled source and target data collectively, supervised HDA aims to learn a model from labelled source and target data, which can be used to classify samples from an unlabelled dataset $\mathcal{D}^u = \{\bm{x}^u_i\}, i = 1,2,...,n_u$ from the target domain, whose feature vectors can be collectively denoted as $\bm{X^u} \in \mathbb{R}^{d_t\times n_u}$.

The number of labelled target samples $n_t$ is usually very small, hence it is difficult to capture the data distribution in the target domain. Semi-supervised HDA takes advantage of the unlabelled target samples $\bm{X^u}$ during model training and can usually achieve better performance. 

In this section, we describe the CDSPP algorithm which is naturally for supervised heterogeneous domain adaptation but can be used to address the semi-supervised heterogeneous domain adaptation problem by incorporating it into an iterative learning framework \citep{wang2019unifying,wang2020unsupervised} as shown in Figure \ref{fig:framework}.

\begin{figure}
    \centering
    {\includegraphics[width=\textwidth]{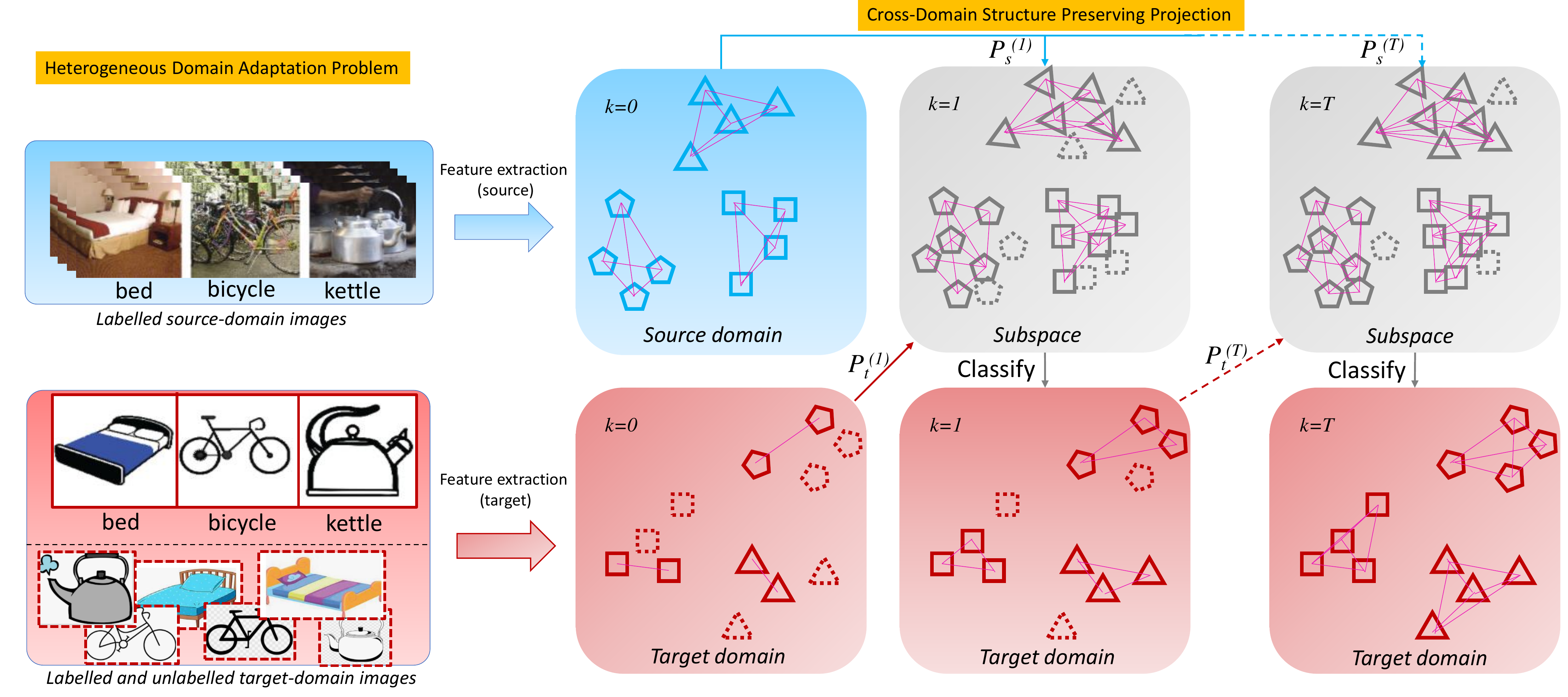}}
    {\caption{An illustration of the heterogeneous domain adaptation problem and our proposed approach using cross-domain structure preserving projection. Left: the HDA problem aims at recognizing unlabelled target-domain samples with the access of labelled source-domain samples and limited labelled target-domain samples. Right: The red and the blue colours are used to represent the feature vectors of samples in the target and source domains respectively; markers of different shapes represent samples from different classes; dashed markers represent unlabelled samples; our proposed CDSPP iteratively learn a common subspace in which the unlabelled target-domain samples are pseudo-labelled and selectively added to the training data set to promote the subspace learning in the next iteration.}
        \label{fig:framework}}
\end{figure}
\subsection{Locality Preserving Projection} \label{sec:lpp}
To make the paper self-contained, we briefly describe the original LPP algorithm \citep{he2004locality} before introducing our proposed CDSPP in the next subsection. Locality Preserving Projection (LPP) was proposed by \citet{he2004locality} to learn a favourable subspace where the local structures of data in the original feature space can be well preserved. Suppose $\bm{x}_i \in \mathbb{R}^{d_0}$ and $\bm{x}_j\in \mathbb{R}^{d_0}$ are two data points in the original feature space, LPP aims at learning a projection matrix $\bm{P} \in \mathbb{R}^{d\times d_0}$ ($d<<d_0$) so that data points close to each other in the original space will still be close in the projected subspace. The objective of LPP can be formulated as:
\begin{equation}
    \label{eq:lpp}
    \min_{\bm{P}} \sum_{i,j} ||\bm{P}^T \bm{x}_i - \bm{P}^T \bm{x}_j||_2^2 \bm{W}_{ij},
\end{equation}
where $\bm{W}$ is the adjacency matrix of the graph constructed by all the data points. According to \cite{he2004locality}, the edges of the graph can be created by either $\epsilon-$neighbourhoods or $k$-nearest neighbours. The edge weights can be determined by the heat kernel $W_{ij} = e^{-\frac{||\bm{x}_i-\bm{x}_j||^2}{t}}$ or the simple binary assignment (i.e. all edges have the weights of 1).
Note that LPP is an unsupervised learning method without the need for labelling information. In the following subsection, we will describe how to extend the LPP algorithm to solve the HDA problems where there exist two heterogeneous domains and a mixture of labelled and unlabelled data.

\subsection{Cross-Domain Structure Preserving Projection} \label{sec:CDSPP}
The supervised version of LPP \citep{wang2017zero} was proved to be able to learn a subspace of better separability than other dimensionality reduction algorithms such as Linear Discriminant Analysis (LDA) \citep{wang2019unifying}. One limitation of LPP is that it can only learn the subspace from samples represented in a homogeneous feature space. To address this problem, we extend the traditional LPP so that its favourable characteristics can benefit cross-domain common subspace learning. Specifically, we aim to learn a projection matrix $\bm{P}_s \in \mathbb{R}^{d_s \times d}$ for the source domain and a projection matrix $\bm{P}_t \in \mathbb{R}^{d_t \times d}$ for the target domain to project the samples from source and target domains into a common subspace whose dimensionality is $d$. We expect the samples projections are close to one another if they are from the same class regardless of which domain they are from. To this end, we have the following objective:
\begin{equation}
\label{eq:cost}
\begin{array}{ll}
\displaystyle \min_{\bm{P}_s,\bm{P}_t} &(\sum_{i,j}^{n_s} || \bm{P}_s^T \bm{x}_i^s - \bm{P_s}^T \bm{x}_j^s||_2^2 \bm{W}_{ij}^s \\ 
\displaystyle &  +\sum_i^{n_s} \sum_j^{n_t} || \bm{P}_s^T \bm{x}_i^s - \bm{P}_t^T \bm{x}_j^t||_2^2 \bm{W}_{ij}^c \\ 
\displaystyle & +\sum_{i,j}^{n_t} || \bm{P}_t^T \bm{x}_i^t - \bm{P}_t^T \bm{x}_j^t||_2^2 \bm{W}_{ij}^t)
\end{array}
\end{equation}
where $\bm{P}^T$ is the transpose of $\bm{P}$; $\bm{W}^s \in \mathbb{R}^{n_s \times n_s}$ is the similarity matrix of the source samples and $\bm{W}^s_{ij} = 1$ if $y^s_i = y^s_j$, 0 otherwise. Similary, $\bm{W}^t \in \mathbb{R}^{n_t \times n_t}$ is the similarity matrix of the \textit{labelled} target samples and $\bm{W}^t_{ij} = 1$ if $y^t_i = y^t_j$, 0 otherwise. $\bm{W}^c \in \mathbb{R}^{n_s \times n_t}$ is the cross-domain similarity matrix and $\bm{W}^c_{ij} = 1$ if $y^s_i = y^t_j$, 0 otherwise. It is noteworthy that all the feature vectors are $l2$-normalised to get rid of the effect of different magnitudes across features. This pre-processing has been proved to be useful for common subspace learning in \cite{wang2017zero,wang2019unifying,wang2020unsupervised}. 

\begin{proposition}
The objective in Eq.(\ref{eq:cost}) can be reformulated as follows:
\begin{equation}
    \label{eq:costDiv}
    \max_{\bm{P}_s,\bm{P}_t} \frac{tr({\bm{X}^s}^T \bm{P}_s \bm{P}_t^T \bm{X}^t {\bm{W}^c}^
    T)}{tr({\bm{X}^s}^T \bm{P}_s \bm{P}_s^T \bm{X}^s \bm{L}^s) +  tr({\bm{X}^t}^T \bm{P}_t \bm{P}_t^T \bm{X}^t \bm{L}^t)}
\end{equation}
where $\bm{L}^s = \bm{D}^s - \bm{W}^s + \frac{1}{2}\bm{D}^{cs}$ and $\bm{L}^t = \bm{D}^t - \bm{W}^t + \frac{1}{2}\bm{D}^{ct}$; $\bm{D}^s \in \mathbb{R}^{n_s \times n_s}$ is a diagonal matrix with $\bm{D}^s_{ii} = \sum_j^{n_s} \bm{W}^s_{ij}$ and $\bm{D}^t \in \mathbb{R}^{n_t \times n_t}$ is a diagonal matrix with $\bm{D}^t_{jj} = \sum_i^{n_t} \bm{W}^t_{ij}$; $\bm{D}^{cs} \in \mathbb{R}^{n_s\times n_s}$ is a diagonal matrix with $\bm{D}_{ii}^{cs} = \sum_j^{n_t} \bm{W}^c_{ij}$ and $\bm{D}^{ct} \in \mathbb{R}^{n_t\times n_t}$ is a diagonal matrix with $\bm{D}_{jj}^{ct} = \sum_i^{n_s} \bm{W}^c_{ij}$.
\end{proposition}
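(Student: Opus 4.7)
The plan is to convert each of the three squared-norm sums in Eq.\,(\ref{eq:cost}) into a trace expression via the standard graph-Laplacian identity, collect the shared quadratic factors in $P_s$ and $P_t$, and then argue that minimizing the resulting difference form is equivalent (by a Rayleigh-quotient-style scaling argument) to maximizing the ratio in Eq.\,(\ref{eq:costDiv}).

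First, I would apply the well-known identity used in the original LPP derivation: for any feature matrix $\bm{X}$ with columns $\bm{x}_i$, any symmetric weight matrix $\bm{W}$, and any projection $\bm{P}$,
\[
\sum_{i,j} \|\bm{P}^T\bm{x}_i-\bm{P}^T\bm{x}_j\|_2^2 \bm{W}_{ij} = 2\,\mathrm{tr}\bigl(\bm{P}^T\bm{X}(\bm{D}-\bm{W})\bm{X}^T\bm{P}\bigr),
\]
where $\bm{D}$ is the diagonal degree matrix of $\bm{W}$. Applied to the first and third summands of Eq.\,(\ref{eq:cost}) this yields, after using the cyclic property of the trace, $2\,\mathrm{tr}({\bm{X}^s}^T\bm{P}_s\bm{P}_s^T\bm{X}^s(\bm{D}^s-\bm{W}^s))$ and $2\,\mathrm{tr}({\bm{X}^t}^T\bm{P}_t\bm{P}_t^T\bm{X}^t(\bm{D}^t-\bm{W}^t))$, respectively.

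Next I would handle the cross-domain term by expanding the squared norm as $(\bm{P}_s^T\bm{x}_i^s)^T(\bm{P}_s^T\bm{x}_i^s)-2(\bm{P}_s^T\bm{x}_i^s)^T(\bm{P}_t^T\bm{x}_j^t)+(\bm{P}_t^T\bm{x}_j^t)^T(\bm{P}_t^T\bm{x}_j^t)$ and summing against $\bm{W}_{ij}^c$. The two diagonal contributions, after summing one index of $\bm{W}^c$, give $\mathrm{tr}({\bm{X}^s}^T\bm{P}_s\bm{P}_s^T\bm{X}^s\bm{D}^{cs})$ and $\mathrm{tr}({\bm{X}^t}^T\bm{P}_t\bm{P}_t^T\bm{X}^t\bm{D}^{ct})$ using exactly the definitions of $\bm{D}^{cs}$ and $\bm{D}^{ct}$; the mixed term reorganises into $-2\,\mathrm{tr}({\bm{X}^s}^T\bm{P}_s\bm{P}_t^T\bm{X}^t{\bm{W}^c}^T)$ by the same index-pushing trick. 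Collecting everything factors out a common $2$, and the $\tfrac12\bm{D}^{cs}$, $\tfrac12\bm{D}^{ct}$ contributions fold naturally into $\bm{L}^s$ and $\bm{L}^t$ as defined. The objective of Eq.\,(\ref{eq:cost}) thus reduces to
\[
2\,\mathrm{tr}({\bm{X}^s}^T\bm{P}_s\bm{P}_s^T\bm{X}^s\bm{L}^s) + 2\,\mathrm{tr}({\bm{X}^t}^T\bm{P}_t\bm{P}_t^T\bm{X}^t\bm{L}^t) - 2\,\mathrm{tr}({\bm{X}^s}^T\bm{P}_s\bm{P}_t^T\bm{X}^t{\bm{W}^c}^T).
\]

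Finally, to pass from this difference form to the ratio in Eq.\,(\ref{eq:costDiv}), I would invoke the standard Rayleigh-quotient argument: the raw minimization is scale-invariant up to the trivial solution $\bm{P}_s=\bm{P}_t=\bm{0}$, so to obtain a nondegenerate problem one fixes the denominator term to a constant, which by Lagrangian duality is equivalent to maximizing the numerator-over-denominator ratio. This converts ``minimize denom $-$ numer'' into ``maximize numer$/$denom,'' giving exactly Eq.\,(\ref{eq:costDiv}). The main obstacle I expect is bookkeeping: carefully tracking the asymmetry of $\bm{W}^c$ (its two different marginal degree matrices $\bm{D}^{cs}$ and $\bm{D}^{ct}$), the factor-of-two that appears for symmetric within-domain sums but not for the cross-domain sum, and verifying through the cyclic property of the trace that the mixed term indeed matches ${\bm{W}^c}^T$ rather than $\bm{W}^c$. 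The scaling argument linking the difference and ratio forms also deserves an explicit remark, since the two are literally equivalent only after imposing the implicit normalization inherent in the Rayleigh-quotient viewpoint.
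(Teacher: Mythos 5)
Your proposal is correct and follows essentially the same route as the paper: expand each of the three weighted squared-norm sums into trace form (the within-domain terms via the standard LPP graph-Laplacian identity, the cross-domain term by direct expansion yielding $\bm{D}^{cs}$, $\bm{D}^{ct}$ and the $-2\,\mathrm{tr}({\bm{X}^s}^T\bm{P}_s\bm{P}_t^T\bm{X}^t{\bm{W}^c}^T)$ mixed term), collect into the difference form with $\bm{L}^s$ and $\bm{L}^t$, and then pass to the ratio objective. In fact your closing remark that the difference and ratio forms are equivalent only under the implicit Rayleigh-quotient normalization is more explicit than the paper, which simply asserts the equivalence in one line.
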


\begin{proof}
By firstly doing the binomial expansion then transforming it to the form of matrix multiplication and trace of matrices, the first term in Eq.(\ref{eq:cost}) can be reformulated as follows: 
\begin{equation}
    \label{eq:term1}
    \begin{array}{ll}
    \sum_{i,j}^{n_s} || \bm{P}_s^T \bm{x}_i^s - \bm{P_s}^T \bm{x}_j^s||_2^2 \bm{W}_{ij}^s \\ 
     = \sum_{i,j}^{n_s} ({\bm{x}_i^s}^T \bm{P}_s \bm{P}_s^T \bm{x}_i^s - 2 {\bm{x}_i^s}^T \bm{P}_s \bm{P}_s^T \bm{x}_j^s + {\bm{x}_j^s}^T \bm{P}_s \bm{P}_s^T \bm{x}_j^s) \bm{W}^s_{ij}\\
     = 2 \sum_i^{n_s} {\bm{x}_i^s}^T \bm{P}_s \bm{P}_s^T \bm{x}_i^s \bm{D}_{ii}^s - 2\sum_{i,j}^{n_s} {\bm{x}_i^s}^T \bm{P}_s \bm{P}_s^T \bm{x}_j^s \bm{W}^s_{ij} \\
    = 2 tr({\bm{X}^s}^T \bm{P}_s \bm{P}_s^T \bm{X}^s \bm{D}^s) - 2 tr({\bm{X}^s}^T \bm{P}_s \bm{P}_s^T \bm{X}^s \bm{W}^s) \\
    \end{array}
\end{equation}

In the similar way, the third term in Eq.(\ref{eq:cost}) can be rewritten as:
\begin{equation}
    \label{eq:term3}
    \begin{array}{ll}
        \sum_{i,j}^{n_t} || \bm{P}_t^T \bm{x}_i^t - \bm{P}_t^T \bm{x}_j^t||_2^2 \bm{W}_{ij}^t \\
        = 2 tr({\bm{X}^t}^T \bm{P}_t \bm{P}_t^T \bm{X}^t \bm{D}^t) - 2 tr({\bm{X}^t}^T \bm{P}_t \bm{P}_t^T \bm{X}^t \bm{W}^t)
    \end{array}
\end{equation}

The second term in Eq.(\ref{eq:cost}) can be rewritten as:
\begin{equation}
    \label{eq:term2}
    \begin{array}{ll}
    \sum_i^{n_s} \sum_j^{n_t} || \bm{P}_s^T \bm{x}_i^s - \bm{P}_t^T \bm{x}_j^t||_2^2 \bm{W}_{ij}^c  \\
     = \sum_i^{n_s} \sum_j^{n_t} ({\bm{x}_i^s}^T \bm{P}_s \bm{P}_s^T \bm{x}_i^s - 2 {\bm{x}_i^s}^T \bm{P}_s \bm{P}_t^T \bm{x}_j^t \\
      \qquad\qquad\qquad + {\bm{x}_j^t}^T \bm{P}_t \bm{P}_t^T \bm{x}_j^t) \bm{W}^c_{ij}\\
     = \sum_i^{n_s} {\bm{x}_i^s}^T \bm{P}_s \bm{P}_s^T \bm{x}_i^s \bm{D}_{ii}^{cs} - 2\sum_i^{n_s}\sum_j^{n_t} {\bm{x}_i^s}^T \bm{P}_s \bm{P}_t^T \bm{x}_j^t \bm{W}^c_{ij} \\
     \qquad\qquad\qquad + \sum_j^{n_t} {\bm{x}_j^t}^T \bm{P}_t \bm{P}_t^T \bm{x}_j^t \bm{D}_{jj}^{ct}  \\
    = tr({\bm{X}^s}^T \bm{P}_s \bm{P}_s^T \bm{X}^s \bm{D}^{cs}) - 2 tr({\bm{X}^s}^T \bm{P}_s \bm{P}_t^T \bm{X}^t {\bm{W}^c}^T) \\
    \qquad\qquad\qquad + tr({\bm{X}^t}^T \bm{P}_t \bm{P}_t^T \bm{X}^t \bm{D}^{ct})\\
    \end{array}
\end{equation}
Substitute Eqs.(\ref{eq:term1}-\ref{eq:term2}) into the objective Eq.(\ref{eq:cost}), we have the following form of objective:
\begin{equation}\label{eq:costMatrixForm}
\begin{array}{ll}
    \displaystyle \min_{\bm{P}_s,\bm{P}_t} \big(tr({\bm{X}^s}^T \bm{P}_s \bm{P}_s^T \bm{X}^s \bm{L}^s) +  tr({\bm{X}^t}^T \bm{P}_t \bm{P}_t^T \bm{X}^t \bm{L}^t) \\
    - tr({\bm{X}^s}^T \bm{P}_s \bm{P}_t^T \bm{X}^t {\bm{W}^c}^T)\big)
\end{array}
\end{equation}
where $\bm{L}^s = \bm{D}^s - \bm{W}^s + \frac{1}{2}\bm{D}^{cs}$ and $\bm{L}^t = \bm{D}^t - \bm{W}^t + \frac{1}{2}\bm{D}^{ct}$.

Minimizing the objective in Eq.(\ref{eq:costMatrixForm}) is equivalent to maximizing the objective in Eq.(\ref{eq:costDiv}).

\end{proof}

\begin{proposition}
\label{propEigen}
The objective in Eq.(\ref{eq:costDiv}) is equivalent to the following generalized eigenvalue problem and the optimal projection matrix $\bm{P}=\begin{bmatrix}\bm{P}_s\\ \bm{P}_t \end{bmatrix}$ can be formed by $d$ eigenvectors corresponding to the largest $d$ eigenvalues:
\begin{equation}
    \label{eq:eig}
    \bm{A} \bm{P} = (\bm{B}+\alpha \bm{I}) \bm{P}\Lambda 
\end{equation}
where $\bm{I} \in \mathbb{R}^{(n_s+n_t)\times(n_s+n_t)}$ is an identity matrix, $\alpha$ is a hyper-parameter for regularization \citep{wang2017zero}, $\Lambda$ is a diagonal eigenvalue matrix and
\begin{gather}\label{eq:a}
 \bm{A} = \begin{bmatrix} \bm{0} & \bm{X}^s\bm{W}^c{\bm{X}^t}^T \\ \bm{X}^t{\bm{W}^c}^T {\bm{X}^s}^T & \bm{0} \end{bmatrix},
 \end{gather}

\begin{gather}\label{eq:b}
    \bm{B} = 
 \begin{bmatrix} \bm{X}^s \bm{L}^s {\bm{X}^s}^T & \bm{0} \\ \bm{0} & \bm{X}^t \bm{L}^t {\bm{X}^t}^T \end{bmatrix}.
 \end{gather}
\end{proposition}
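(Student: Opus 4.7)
The plan is to reduce the objective in Eq.~(\ref{eq:costDiv}) to a single trace ratio in the stacked variable $\bm{P}=\begin{bmatrix}\bm{P}_s\\ \bm{P}_t\end{bmatrix}\in\mathbb{R}^{(d_s+d_t)\times d}$, and then invoke the standard Lagrangian argument. First I would verify by direct block-matrix multiplication that
\begin{equation*}
\bm{P}^T \bm{A} \bm{P} = \bm{P}_s^T \bm{X}^s \bm{W}^c {\bm{X}^t}^T \bm{P}_t + \bm{P}_t^T \bm{X}^t {\bm{W}^c}^T {\bm{X}^s}^T \bm{P}_s,
\end{equation*}
so that, by the cyclic property and the fact that the two summands are transposes of each other, $tr(\bm{P}^T \bm{A}\bm{P}) = 2\,tr({\bm{X}^s}^T \bm{P}_s \bm{P}_t^T \bm{X}^t {\bm{W}^c}^T)$, matching the numerator of Eq.~(\ref{eq:costDiv}) up to a factor of $2$. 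The block-diagonal structure of $\bm{B}$ then gives $tr(\bm{P}^T \bm{B}\bm{P}) = tr({\bm{X}^s}^T \bm{P}_s \bm{P}_s^T \bm{X}^s \bm{L}^s) + tr({\bm{X}^t}^T \bm{P}_t \bm{P}_t^T \bm{X}^t \bm{L}^t)$, exactly the denominator. Hence Eq.~(\ref{eq:costDiv}) reads $\max_{\bm{P}} \tfrac{1}{2}\,tr(\bm{P}^T \bm{A}\bm{P})/tr(\bm{P}^T \bm{B}\bm{P})$.

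Next I would handle the scale-invariance of this ratio by imposing the normalising constraint $\bm{P}^T(\bm{B}+\alpha\bm{I})\bm{P} = \bm{I}_d$, where the $\alpha\bm{I}$ term ensures positive-definiteness of the constraint matrix (the raw $\bm{B}$ is typically rank-deficient since it is formed from $n_s+n_t$ samples living in a $(d_s+d_t)$-dimensional space); the role of this Tikhonov-style regulariser matches that used in \cite{wang2017zero}. Under this constraint the problem becomes simply $\max tr(\bm{P}^T \bm{A}\bm{P})$.

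Then I would form the Lagrangian
\begin{equation*}
\mathcal{L}(\bm{P},\Lambda) = tr(\bm{P}^T \bm{A} \bm{P}) - tr\!\bigl(\Lambda\bigl(\bm{P}^T(\bm{B}+\alpha\bm{I})\bm{P} - \bm{I}_d\bigr)\bigr),
\end{equation*}
with $\Lambda$ a symmetric multiplier matrix. Using symmetry of $\bm{A}$ and $\bm{B}$ and differentiating with respect to $\bm{P}$ gives the stationarity condition $\bm{A}\bm{P} = (\bm{B}+\alpha\bm{I})\bm{P}\,\Lambda$, which is precisely Eq.~(\ref{eq:eig}). By a rotation in the $d$-dimensional latent space we may take $\Lambda$ diagonal without loss of generality; the objective at a critical point reduces to $tr(\Lambda)$, so the maximum is attained by selecting the $d$ generalised eigenvectors associated with the largest eigenvalues.

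The main obstacle I anticipate is the well-known subtlety that the trace-ratio problem $\max tr(\bm{P}^T \bm{A}\bm{P})/tr(\bm{P}^T \bm{B}\bm{P})$ and the ratio-trace problem $\max tr\!\bigl((\bm{P}^T \bm{B}\bm{P})^{-1}\bm{P}^T \bm{A}\bm{P}\bigr)$ only coincide under the normalisation above, and not in general. I would therefore be explicit that the constraint $\bm{P}^T(\bm{B}+\alpha\bm{I})\bm{P}=\bm{I}_d$ is without loss of generality for the ratio (by rescaling $\bm{P}$ column-wise in the constraint's orthonormal basis) and that, once imposed, the remaining problem is genuinely equivalent to the generalised eigenproblem. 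This is the only step where care is needed; the rest is bookkeeping with block matrices and the cyclic trace identity.
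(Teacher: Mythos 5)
Your route differs from the paper's in a real, if modest, way. The paper never stacks the variables or introduces a normalisation constraint: it writes the objective as the ratio $\mathcal{J}(\bm{P}_s,\bm{P}_t)$, differentiates directly with respect to $\bm{P}_s$ and $\bm{P}_t$, sets the derivatives to zero, observes that the resulting pair of conditions $\bm{S}_c\bm{P}_t = 2\mathcal{J}\,\bm{S}_s\bm{P}_s$ and $\bm{S}_c^T\bm{P}_s = 2\mathcal{J}\,\bm{S}_t\bm{P}_t$ has the objective value itself as the scalar coefficient, and then stacks these into the block generalized eigenproblem, keeping the top $d$ eigenvectors; the $\alpha\bm{I}$ term never appears in its derivation and is only appended as a regulariser following \citet{wang2017zero}. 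Your version---stack first, verify $tr(\bm{P}^T\bm{A}\bm{P})$ and $tr(\bm{P}^T\bm{B}\bm{P})$ reproduce numerator and denominator, fix the scale with $\bm{P}^T(\bm{B}+\alpha\bm{I})\bm{P}=\bm{I}_d$, and apply a matrix Lagrange multiplier---arrives at the same Eq.~(\ref{eq:eig}) and actually accounts better for where $(\bm{B}+\alpha\bm{I})$ and a genuinely diagonal $\Lambda$ come from, since the paper's stationarity conditions literally force $\Lambda=2\mathcal{J}\,\bm{I}$.

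The one step that does not hold as you state it is the ``without loss of generality'' claim. The trace ratio is invariant only under $\bm{P}\mapsto c\,\bm{P}\bm{Q}$ with $c$ a scalar and $\bm{Q}$ orthogonal; column-wise rescaling, or the whitening needed to enforce $\bm{P}^T(\bm{B}+\alpha\bm{I})\bm{P}=\bm{I}_d$, changes the value of the ratio. Hence your constrained trace maximisation is the ratio-trace surrogate, and for $d>1$ its optimum can genuinely differ from the trace-ratio optimum of Eq.~(\ref{eq:costDiv}); the proposed rescaling argument cannot close that gap. In fairness, the paper's own proof leaves exactly the same gap unbridged (its first-order conditions give a single eigenvalue equal to the objective value, yet it then retains $d$ distinct leading eigenvalues, citing \citet{he2004locality}), so your write-up matches the paper's level of rigor---just do not present the constraint as an exact reformulation of the ratio.
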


\begin{proof}
To make the proof process concise, we introduce notations $\bm{S}_s=\bm{X}^s \bm{L}^s {\bm{X}^s}^T$, $\bm{S}_t=\bm{X}^t \bm{L}^t {\bm{X}^t}^T$ and $\bm{S}_c=\bm{X}^s\bm{W}^c{\bm{X}^t}^T$. 

Let 
\begin{equation}
    \label{eq:costJ}
    \mathcal{J}(\bm{P}_s,\bm{P}_t) = \frac{tr(\bm{P}_t^T \bm{S}_c^T \bm{P}_s)}{tr(\bm{P}_s^T \bm{S}_s \bm{P}_s)+tr(\bm{P}_t^T \bm{S}_t \bm{P}_t)}
\end{equation}
 be the objective function in Eq.(\ref{eq:costDiv}), we calculate the partial derivatives \citep{petersen2008matrix} of $\mathcal{J}$ w.r.t. $\bm{P}_s$ and $\bm{P}_t$ respectively, set them to 0 and get the following equations:

\begin{equation}
    \label{eq:partialPs}
    \bm{S}_c \bm{P}_t =\frac{2 tr(\bm{P}_t^T \bm{S}_c \bm{P}_s)}{tr(\bm{P}_s^T \bm{S}_s \bm{P}_s)+tr(\bm{P}_t^T \bm{S}_t \bm{P}_t)} \bm{S}_s \bm{P}_s
\end{equation}
\begin{equation}
    \label{eq:partialPt}
    \bm{S}_c^T \bm{P}_s =\frac{2 tr(\bm{P}_t^T \bm{S}_c \bm{P}_s)}{tr(\bm{P}_s^T \bm{S}_s \bm{P}_s)+tr(\bm{P}_t^T \bm{S}_t \bm{P}_t)} \bm{S}_t \bm{P}_t
\end{equation}
Note that the coefficients on the right side of Eqs(\ref{eq:partialPs}-\ref{eq:partialPt}) are exactly the objective in Eq.(\ref{eq:costJ}). It is easy to construct the following generalized eigenvalue problem by combining Eqs.(\ref{eq:partialPs}-\ref{eq:partialPt}): 
\begin{gather}
 \begin{bmatrix} \bm{0} & \bm{S}_c \\ \bm{S}_c^T & \bm{0} \end{bmatrix} 
 \begin{bmatrix} \bm{P}_s \\ \bm{P}_t \end{bmatrix}  = 
 \begin{bmatrix} \bm{S}_s & \bm{0} \\ \bm{0} & \bm{S}_t \end{bmatrix} 
  \begin{bmatrix} \bm{P}_s \\ \bm{P}_t \end{bmatrix} \Lambda.
 \end{gather}
The maximum objective is given by the largest eigenvalue solution to the generalized eigenvalue problem \citep{he2004locality} and the eigenvectors corresponding to the largest $d$ eigenvalues will form the projection matrix $\bm{P}_s$ and $\bm{P}_t$.

\end{proof}

\subsection{Recognition in the Subspace}\label{sec:recognition}
Once the projection matrices $\bm{P}_s$ and $\bm{P}_t$ are learned, we are able to project all the labelled samples into the learned common subspace by $\bm{z}^s_i = \bm{P}_s^T \bm{x}^s_i$ and $\bm{z}^t_i = \bm{P}_{t}^T \bm{x}^t_i$. Similar to the pre-processing for the training data, the feature vectors $\bm{x}$ need to be $l2$-normalised before being projected to the subspace. For the same reason, we also apply $l2$-normalisation to the projected vectors $\bm{z}$. The $l2$-normalisation re-allocates the projected vectors in the subspace to the surface of a hyper-sphere which will benefit the measurement of distances when do the recognition using the nearest neighbour method. More importantly, the $l2$-normalisation adds non-linearity to the process so that our proposed CDSPP method can handle practical problems when linear projection assumptions do not hold.

For each class, we calculate the class mean $\bar{\bm{z}}_c$ for $c=1,2,...,C$ using all the labelled sample from both source and target domains. Given an unlabelled target sample $\bm{x}^u$, we classify it to the closest class in terms of its Euclidean distances to the class means:
\begin{equation}
    \label{eq:recognition}
    y^* = \argmin_c d(\bar{\bm{z}}_c, \bm{P}_t^T \bm{x}^u)
\end{equation}
The proposed CDSPP for supervised HDA is summarized in Algorithm \ref{alg:hdasup}.

\textbf{Relation to DAMA} The CDSPP algorithm is quite similar to DAMA proposed in \citep{wang2011heterogeneous} at the first glance, however, they are essentially different from each other in that CDSPP does not seek to push the sample projections belonging to different classes apart, since the penalty imposed for this purpose (e.g., maximizing the term $B$ in \citep{wang2011heterogeneous}) might misguide the solution to focus too much on the separation of classes which are originally close to each other and hurt the overall separability of the learned subspace. In contrast, our objective in Eq.(\ref{eq:cost}) can guarantee the separability of the learned subspace by promoting the preserving of cluster structures underlying the original data distributions, which is simpler but more effective as validated by experiments.

\begin{algorithm}
\caption{Supervised HDA using CDSPP}
    \label{alg:hdasup}
    \renewcommand{\algorithmicinput}{\textbf{Input:}}
    \renewcommand{\algorithmicoutput}{\textbf{Output:}}
    \renewcommand{\algorithmicensure}{\textbf{Training:}}
    \renewcommand{\algorithmicrequire}{\textbf{Testing:}}
    \begin{algorithmic}[1]
        \INPUT labelled source data set $\mathcal{D}^s = \{(\bm{x}^s_i,y^s_i)\}, i = 1,2,...,n_s$ and labelled target data set $\mathcal{D}^t=\{\bm{x}_i^t,y_i^t\},i=1,2,...,n_t$, the dimensionality of subspace $d$.
        
        \OUTPUT The projection matrix $\bm{P}_s$ and $\bm{P}_t$ for source and target domains, the labels predicted for unlabelled target data $\bm{X}^u$.
        \ENSURE
        \STATE Learn the projection $\bm{P}_s$ and $\bm{P}_t$ using labelled data $\mathcal{D}^s \cup \mathcal{D}^t$ by solving the generalized eigenvalue problem in Eq.(\ref{eq:eig});
        \REQUIRE
        \STATE Classify unlabelled target samples $\bm{X}^u$ using Eq.(\ref{eq:recognition}).
    \end{algorithmic}
\end{algorithm}

\begin{algorithm}
\caption{Semi-supervised HDA using CDSPP}
    \label{alg:hda}
    \renewcommand{\algorithmicinput}{\textbf{Input:}}
    \renewcommand{\algorithmicoutput}{\textbf{Output:}}
    \renewcommand{\algorithmicensure}{\textbf{Training:}}
    \renewcommand{\algorithmicrequire}{\textbf{Testing:}}
    \begin{algorithmic}[1]
        \INPUT labelled source data set $\mathcal{D}^s = \{(\bm{x}^s_i, y^s_i)\}, i = 1,2,...,n_s$, labelled target data set $\mathcal{D}^t=\{\bm{x}_i^t, y^t_i\},i=1,2,...,n_t$, unlabelled target data set $\mathcal{D}^u=\{\bm{x}_i^u\},i=1,2,...,n_u$ the dimensionality of subspace $d$, number of iteration $T$.
        
        \OUTPUT The projection matrix $\bm{P}_s$ and $\bm{P}_t$ for source and target domains, the labels predicted for unlabelled target data $\bm{X}^u$.
        \ENSURE
        \STATE Initialize $k$=1;
        \STATE Learn the projection $\bm{P}_s^{(0)}$ and $\bm{P}_t^{(0)}$ using labelled data $\mathcal{D}^s \cup \mathcal{D}^t$ by solving the generalized eigenvalue problem in Eq.(\ref{eq:eig});
        \STATE Get the unlabelled target data set $\mathcal{D}^u$;
        \WHILE {$k \leq T$}
        \STATE Label all the samples from $\mathcal{D}^u$ by Eq.(\ref{eq:recognition});
        \STATE Select a subset of (top $kn_u/T$ most confident) pseudo-labelled target samples $\mathcal{S}^{(k)} \subseteq \mathcal{D}^u$;
        \STATE Learn $\bm{P}_s^{(k)}$ and $\bm{P}_t^{(k)}$ using a combination of labelled and pseudo-labelled data sets $\mathcal{D}^s \cup \mathcal{D}^t \cup \mathcal{S}^{(k)}$;
        \STATE $k \leftarrow k+1$;
        \ENDWHILE
        
        \REQUIRE
        \STATE Classify unlabelled target samples $\bm{X}^u$ using Eq.(\ref{eq:recognition}).
    \end{algorithmic}
\end{algorithm}

\subsection{Extending to Semi-Supervised HDA} \label{sec:shda}
The CDSPP algorithm is naturally suitable for supervised HDA but can be extended to semi-supervised HDA by incorporating it into an iterative pseudo-labelling framework \citep{wang2019unifying}. Given a set of unlabelled target samples $\bm{X}^u$, they can be labelled by Eq.(\ref{eq:recognition}). The pseudo-labelled target samples can be used to update the projection matrices $\bm{P}_s$ and $\bm{P}_t$. However, when the domain shift is large and the number of labelled target samples is limited, the pseudo-labels can be wrong for a considerable number of target samples. In this case, the mistakenly pseudo-labelled target samples might hurt projection learning. To reduce this risk, confidence aware pseudo-labelling is proposed in \citep{wang2019unifying}. We employ the same idea and progressively select the most confidently pseudo-labelled target samples for the next iteration of CDSPP learning. The proposed CDSPP for semi-supervised HDA is summarized in Algorithm \ref{alg:hda}.

\subsection{Complexity Analysis}
The time complexity of CDSPP is mainly contributed by two parts: the matrix multiplications in Eqs.(\ref{eq:a}-\ref{eq:b}) and the eigen decomposition problem. The complexity of matrix multiplications is $\mathcal{O}((n_s+n_t)d_sd_t)$. The complexity of eigen decomposition is generally $\mathcal{O}((d_s+d_t)^3)$. As a result, the CDSPP algorithm has a complexity of $\mathcal{O}((n_s+n_t)d_sd_t+(d_s+d_t)^3)$. In the case of semi-supervised HDA, the time complexity will be increased by $T$ times and the value of $n_t$ increases by the number of selected pseudo-labelled target samples in each iteration.

\section{Experiments} \label{sec:experiments}
To evaluate the effectiveness of the proposed method in heterogeneous domain adaptation, we conduct thorough experiments on commonly used benchmark datasets. We compare the proposed approach with existing HDA methods and analyze its sensitivity to hyper-parameters. 

\subsection{Datasets and Experimental Settings}
\textbf{Office-Caltech} \citep{gong2012geodesic} is an image dataset containing four domains: Amazon (A), Webcam (W), DSLR (D) and Caltech (C) from 10 common classes. Two image features (i.e. 4096-dim Decaf$_6$ and 800-dim SURF) are used for cross-domain adaptation.
\textbf{Multilingual Reuters Collection (MRC)} \citep{amini2009learning} is a cross-lingual text classification dataset containing 6 classes in 5 languages (i.e. EN, FR, GE, IT, SP). We follow the settings in \citep{hubert2016learning} extracting BoW features and applying PCA to get heterogeneous feature dimensions (i.e. 1131, 1230, 1417, 1041, 807 respectively) for five domains. In our experiments, SP serves as the target domain and the other four languages as the source domains respectively. As a result, we have four HDA tasks. 
\textbf{NUS-WIDE} \citep{chua2009nus} and \textbf{ImageNet} \citep{deng2009imagenet} datasets are employed for text to image domain adaptation. Following \citep{chen2016transfer} we consider 8 overlapping classes using tag information represented by 64-dim features from NUS-WIDE as the source domain and 4096-dim Decaf$_6$ features of images from ImageNet as the target domain. 
However, the above datasets contain very limited numbers of classes and may not discriminate capabilities of different methods.  We introduce \textbf{Office-Home} \citep{venkateswara2017deep} containing four domains (i.e. Art, Clipart, Product and Real-world)  as a new testbed for HDA. We use VGG16 \citep{simonyan2014very} and ResNet50 \citep{he2016deep} models pre-trained on ImageNet to extract 4096-dim and 2048-dim features. More details of the datasets and protocols used in our experiments are summarized in Table \ref{table:datasets}.

\begin{table}[!t]
    \centering
    {
        \centering
        \caption[]{The statistics of datasets (notations: LSS/c -- labelled Source Samples per class; LTS/c -- labelled Target Sample per class; UTS/c -- Unlabelled Target Samples per class; all -- all samples except the ones chosen as labelled target samples).
        }
        \label{table:datasets}
        \resizebox{0.8\columnwidth}{!}{%
            \begin{tabular}{ccccccc}
                \hline
                Dataset & \# Domain & \# Task & \# Class & \shortstack[c]{\# LSS/c} & \shortstack[c]{\# LTS/c} & \shortstack[c]{\# UTS/c} \\ \hline
                Office-Caltech & 4 & 16 & 10 & 20 & 3 & all \\
                MRC & 5 & 4 & 6 & 100 & 10 & 500\\
                NUS-ImageNet & 2 & 1 & 8 & 100 & 3 & 100\\
                Office-Home & 4 & 16 & 65 & 20 & 3 & all\\
                \hline
            \end{tabular}%
        }
    }
\end{table}

\subsection{Comparative Methods}

To evaluate the effectiveness of the proposed CDSPP in different HDA problems, we conduct a comparative study and compare the performance of CDSPP with state-of-the-art methods in both supervised and semi-supervised settings. Specifically, we compare with SVM$_t$, HFA \citep{li2013learning}, CDLS$\_$sup \citep{hubert2016learning} and a variant of DAMA \cite{wang2011heterogeneous} under the supervised HDA setting (i.e. the unlabelled target samples are not available during training). 
\begin{itemize}
    \item {SVM$_t$} is a baseline method that trains an SVM model on the target dataset $\mathcal{D}^t$ in a conventional supervised learning manner without using the source domain data.
    \item HFA (Heterogeneous Feature Augmentation \citep{li2013learning}) is designed to solve the supervised HDA problem by augmenting the original features $\bm{x}^s, \bm{x}^t$ with transformed features $\bm{P}\bm{x}^s$, $\bm{Q}\bm{x}^t$ and zero vectors. The projection matrices $\bm{P}$ and $\bm{Q}$ for the source and target domains map the original features into a common subspace so that the similarity of features across two domains can be directly compared. The objective of learning $\bm{P}$ and $\bm{Q}$ is incorporated into the framework of classifier (i.e. SVM) training.
    \item CDLS$\_sup$ (Cross-Domain Landmark Selection \citep{hubert2016learning}) is the supervised version of CDLS which aims to learn a projection matrix $\bm{A}$ to map source-domain data into the target domain. The objective is to align the cross-domain marginal and conditional data distributions by minimizing the  Maximum Mean Discrepancy (MMD).
    \item DAMA$\_sup$ (Domain Adaptation Using Manifold Alignment \cite{wang2011heterogeneous}) is originally designed for semi-supervised HDA problems. Similar to our proposed CDSPP, it also aims to learn two projection matrices to map source and target domain data to a common subspace where the manifolds of data from two domains are aligned. We adapt it for supervised HDA by considering only labelled data when constructing the feature similarity matrix $\bm{W}$, the label based similarity matrix $\bm{W}^s$ and dissimilarity matrix $\bm{W}^d$. Different from the suggestion in the original paper, we use an optimal $\mu=0.1$ throughout our experiments since this setting achieves the best performance.
    
\end{itemize}

For semi-supervised HDA, we compare with DAMA \citep{wang2011heterogeneous}, SHFA \citep{li2013learning}, CDLS \citep{hubert2016learning}, PA \citep{li2018heterogeneous}, TIT \citep{li2018transfer}, STN \citep{yao2019heterogeneous}, DDACL\citep{yao2020discriminative}, SSAN \citep{li2020simultaneous}
and DAMA+, our extension of DAMA by incorporating it into our iterative learning framework (c.f. Section \ref{sec:shda}). 

\begin{itemize}
    \item DAMA \cite{wang2011heterogeneous} is employed in the semi-supervised HDA experiments in its original form except the hyper-parameter $\mu$ is set as 0.1 as our experimental results show empirically it gives the optimal performance.
    \item SHFA (Semi-supervised HFA \cite{li2013learning}) is an extension of HFA. It takes advantage of the unlabelled target-domain data by replacing the SVM in HFA with a Transductive SVM (T-SVM) \cite{collobert2006large} model.
    \item CDLS \cite{hubert2016learning} is designed for semi-supervised HDA. As described above, it aims to learn a projection matrix $\bm{A}$ to map source-domain data into the target domain so that cross-domain data can be aligned. When unlabelled target-domain data are available in the semi-supervised HDA, the unlabelled data are pseudo-labelled by the supervised version CDLS$\_sup$. Subsequently, the pseudo-labelled data are used to update the projection $\bm{A}$. The processes are repeated for multiple iterations. In particular, the instances are weighted by learnable weights when constructing the objective function.
    \item PA (Progressive Alignment \cite{li2018heterogeneous}) and TIT (Transfer Independently Together \cite{li2018transfer}) share a similar framework to CDLS but employ different algorithms of transformation matrix learning (involving MMD, graph embedding and regularisation) and different instance weight estimation strategies. The unlabelled target-domain data are also pseudo-labelled to optimize the transformation matrices iteratively.
    \item STN (Soft Transfer Network \cite{yao2019heterogeneous}) jointly learns a domain-shared classifier and a domain-invariant subspace in an end-to-end manner. The network model is learned by optimising the objective similar to those in the aforementioned works, i.e., MMD. Besides, the unlabelled target-domain data are used by the soft-label strategy.
    \item DDACL (Discriminative Distribution Alignment with Cross-entropy Loss \cite{yao2020discriminative}) trains an adaptive classifier by both reducing the distribution divergence and enlarging distances between class centroids.
    
    \item SSAN (Simultaneous Semantic Alignment Network \cite{li2020simultaneous}) employs an implicit semantic correlation loss to transfer the correlation knowledge of source categorical prediction distributions to the target domain. A triplet-centroid alignment mechanism is explicitly applied to align feature representations for each category by leveraging target pseudo-labels. Note that the results of best accuracy of the test samples throughout the training process were reported in \cite{li2020simultaneous}, we argue that this is not achievable in practice since the labels of test samples are not available during training. Instead, we report the results achieved in the last iterations in our experiments.
    
    \item DAMA+ is our adaptation of the original DAMA by incorporating the DAMA algorithm into our proposed iterative learning framework with selective pseudo-labelling. Specifically, we use the supervised version of DAMA described above to initialise the projection matrices and get the pseudo-labels of unlabelled target-domain data. The selected most confidently pseudo-labelled target-domain data will contribute to the update of projection matrices in the next iteration of learning. Finally, the optimal projection matrices and predicted target-domain data labels are obtained. 
    \item CDSPP+PCA is a variant of CDSPP by applying PCA to the original features and CDSPP is subsequently applied to the low-dimensional features. This pre-processing is specially designed for handcrafted features in the MRC and NUS-ImageNet datasets and 50 principal components are reserved for all features. 
\end{itemize}
In all experiments, we use the optimal parameters suggested in the original papers for the comparative methods if not otherwise specified whilst set the hyper-parameters of CDSPP empirically as $d$ equal to the number of classes in the dataset, $\alpha=10$ and $T=5$. More details of hyper-parameter value selection will be discussed later.

\subsection{Comparison Results}

Although there exist fixed experimental protocols in terms of the number of labelled samples used for training as shown in Table \ref{table:datasets}, there is no standard data splits publicly available to follow. As will be demonstrated in our experimental results, selecting different samples for training can lead to significant performance variance.  We generate data splits randomly in our experiments\footnote{The data splits and code are released: https://github.com/hellowangqian/cdspp-hda}. To mitigate the biases caused by the data selection, ten random data splits are generated for each adaptation task. We report the mean and standard deviation of the classification accuracy over these ten trials for each adaptation task. The results for all comparative methods are reproduced using the same data splits for a direct comparison. The implementations released by the authors are employed in our experiments. As a result, the results in this paper are not comparable with those reported in other papers since different sample selections have been used in our experiments. Our experimental results of both supervised and semi-supervised HDA on four datasets are shown in Tables \ref{table:mrc-tag2image}-\ref{table:resnet2vgg} from which we can obtain the following insights.

\begin{table}[!t]
    \centering
    {
        \centering
        \caption[]{Mean(std) of classification accuracy (\%) over ten trials for cross-language and tag-to-image adaptation under supervised (denoted by $*$) and semi-supervised settings (each column represents one Source $\to$ Target adaptation task).
        }
        \label{table:mrc-tag2image}
        \resizebox{0.9\columnwidth}{!}{%
            \begin{tabular}{lccccc|c}
                \hline
                Method & EN$\to$SP & FR$\to$SP& GE$\to$SP& IT$\to$SP & Avg & Tag$\to$Image \\ \hline
                SVM$_t$ * & 67.0(2.4) & 67.0(2.4) & 67.0(2.4) & 67.0(2.4) & 67.0 & 60.6(6.0) \\
                HFA \citep{li2013learning} * & 68.1(3.0) & 68.0(3.0) & 68.0(3.0) & 68.0(3.0) & 68.0 & 67.5(2.5)\\
                CDLS\_sup \citep{hubert2016learning} * & 63.0(3.6) & 63.4(2.4) & 64.0(2.2) & 64.6(3.6) & 63.8 & 66.3(3.9)\\
                DAMA\_sup *  & 66.8(2.5) & 66.3(3.3) & 66.3(3.0) & 66.7(2.7) & 66.5 & 66.9(2.6) \\
                CDSPP\_sup (Ours) * & 67.2(2.8) & 67.3(2.9) & 67.3(2.9) & 67.3(2.8) & 67.3 & 67.2(3.0)\\
                \hline
                DAMA \citep{wang2011heterogeneous}  & 67.0(2.5) & 66.6(3.1) & 66.7(3.0) & 67.4(2.8) & 66.9 & 67.0(2.5)\\
                SHFA \citep{li2013learning}& 66.9(3.7) & 66.1(2.7) & 67.5(3.1) & 67.4(2.2) & 67.0 &  68.1(2.7)\\
                CDLS \citep{hubert2016learning} & 69.4(3.0) & 69.4(3.0) & 69.4(3.2) & 69.3(3.1) & 69.4 & 69.6(2.1)\\
                PA \citep{li2018heterogeneous}& \bf 71.4(2.9) & \bf 71.6(2.9) & \bf 71.7(3.0) & \bf 72.3(2.5) & \bf 71.7 & 70.5(4.0)\\
                TIT \citep{li2018transfer} & 67.1(2.8) & 67.6(2.6) & 66.1(3.5) & 67.8(2.0) & 67.2 & 70.7(3.4)\\
                STN \citep{yao2019heterogeneous} & 67.1(3.6) & 67.3(2.5) & 66.9(3.5) & 66.7(3.8) & 67.0 & 74.3(5.2)\\
                DDACL \cite{yao2020discriminative} & 70.2(3.0) & 70.4(3.1) & 70.8(3.0) & 70.9(3.0) & 70.6 & 73.8(2.8) \\
                SSAN \cite{li2020simultaneous}& 69.9(2.9)& 69.4(2.8)& 69.3(4.0)& 70.2(2.5)& 69.7 & 71.4(1.2) \\
                DAMA + & 68.9(2.1) & 68.8(4.0) & 68.9(2.7) & 68.2(3.5) & 68.7 & 73.4(4.3)\\
                CDSPP (Ours) & 69.1(3.2) & 69.0(3.6) & 68.8(3.2) & 68.8(3.0) & 68.9 & 74.7(3.4)\\
                CDSPP+PCA (Ours) & \bf 71.2(3.2) & \bf 71.7(3.1) & \bf 71.4(3.0) & \bf 72.1(3.0) & \bf 71.6 & \bf 76.5(3.3) \\

                \hline
            \end{tabular}%
        }
    }
\end{table}

\begin{table*}[!htbp]
    \centering
    {
        \centering
        \caption[]{Mean(std) of classification accuracy (\%) over ten trials on the Office-Caltech dataset using SURF (source) and Decaf$_6$ (target) features under supervised (denoted by $*$) and semi-supervised settings (each column represents one Source $\to$ Target adaptation task).}
        \label{table:surf2decaf}
        \resizebox{\columnwidth}{!}{%
            \begin{tabular}{l cccc cccc ccccc cccc}
                \hline
                Method & C$\to$C & C$\to$A & C$\to$D & C$\to$W&A$\to$C & A$\to$A&A$\to$D & A$\to$W & D$\to$C & D$\to$A & D$\to$D & D$\to$W & W$\to$C & W$\to$A & W$\to$D & W$\to$W& Avg \\ \hline
                SVM$_t$ * & 73.6(4.9) & 87.9(2.2) & 92.3(3.6) & 88.4(3.8)& 73.6(4.9) & 87.9(2.2) & 92.3(3.6) & 88.4(3.8)& 73.6(4.9) & 87.9(2.2) & 92.3(3.6) & 88.4(3.8)& 73.6(4.9) & 87.9(2.2) & 92.3(3.6) & 88.4(3.8)&85.5\\
                HFA \citep{li2013learning} * & 80.1(2.3) & 88.9(1.9) & 91.6(3.6) & 90.7(3.5) & 80.2(2.3) & 88.9(1.9) & 91.5(3.6) & 90.5(3.6) & 80.2(2.2) & 88.8(1.9) & 91.8(3.6) & 90.7(3.5) & 80.2(2.3) & 88.8(1.9) & 91.5(3.7) & 90.6(3.7) & 87.8\\
                CDLS\_sup \citep{hubert2016learning} *& 76.1(2.1) & 86.6(3.2) & 91.3(4.7) & 87.4(3.5) & 75.9(3.5) & 87.0(2.8) & 90.6(3.8) & 86.0(3.6) & 51.5(4.4) & 74.2(2.4) & 86.6(3.2) & 77.2(5.1) & 74.7(4.1) & 85.4(3.0) & 90.5(3.8) & 86.0(3.5) & 81.7\\
                DAMA\_sup * & 78.7(2.4) & 87.3(2.2) & 91.5(2.6) & 88.6(4.3) & 77.4(3.2) & 85.9(2.4) & 90.7(3.3) & 88.2(4.1) & 79.6(2.2) & 88.8(1.6) & 90.1(3.6) & 89.4(4.1) & 78.5(2.6) & 87.4(2.0) & 89.1(3.1) & 88.6(4.7) & 86.2\\
                CDSPP\_sup (Ours) * & 80.3(2.0) & 89.0(1.9) & 92.0(3.5) & 90.7(3.8) & 80.3(2.1) & 89.1(1.9) & 91.7(3.7) & 90.7(3.7) & 79.8(2.1) & 88.9(1.8) & 90.4(3.9) & 90.1(3.9) & 80.4(2.2) & 89.0(1.8) & 91.5(4.1) & 90.6(3.8) & 87.8\\
                \hline
                DAMA \citep{wang2011heterogeneous}& 76.6(2.6) & 86.2(1.9) & 91.0(2.5) & 88.2(4.3) & 73.6(4.7) & 83.3(2.6) & 88.8(3.7) & 86.5(4.4) & 77.5(2.5) & 88.4(1.6) & 90.7(4.2) & 90.1(3.8) & 76.1(2.9) & 86.0(2.3) & 87.7(4.7) & 86.8(5.8) & 84.8\\
                SHFA \citep{li2013learning}& 77.1(2.8) & 86.2(3.8) & 93.0(3.6) & 90.0(2.6) & 80.5(3.1) & 86.7(2.2) & 94.3(2.5) & 90.0(4.0) & 81.6(2.1) & 88.5(2.9) & 93.5(3.9) & 92.0(4.1) & 80.5(1.8) & 88.5(2.4) & 93.5(3.5) & 89.5(4.2) & 87.8\\
                CDLS \citep{hubert2016learning} & 80.6(1.8) & 88.8(2.1) & 93.0(3.2) & 91.1(3.7) & 80.6(1.8) & 88.8(2.1) & 92.0(3.0) & 91.0(4.5) & 78.4(2.7) & 87.2(2.3) & 93.0(3.7) & 88.9(5.6) & 81.0(2.0) & 88.6(2.2) & 92.1(3.3) & 91.4(4.2) & 87.9\\
                PA \citep{li2018heterogeneous} & 87.2(1.1) & 90.8(1.3) & 92.9(3.3) & 93.9(3.9) & 87.0(1.1) & 90.5(1.7) & 94.7(2.5) & 94.0(3.9) & 87.0(1.3) & 90.5(2.0) & \bf 94.5(2.8) & 94.3(3.7) & 87.0(1.3) & 90.7(1.5) & 93.4(4.1) & 92.8(4.6) & 91.3\\
                TIT \citep{li2018transfer} & 84.9(1.7) & 89.9(1.6) & 94.6(3.1) & 92.2(4.3) & 84.6(1.5) & 89.7(1.7) & 94.6(2.2) & 92.3(4.9) & 82.7(1.5) & 88.7(1.9) & 94.3(2.7) & 92.1(4.0) & 84.7(1.6) & 89.5(1.8) & 92.5(2.8) & 92.5(4.3) & 90.0\\
                STN \citep{yao2019heterogeneous} & 88.2(1.7) & 92.4(0.7) & 94.4(2.0) & 92.8(4.9) & \bf 88.4(1.6) & 92.5(0.7) & 95.0(2.0) & 93.9(4.1) & 87.9(1.7) & 92.2(0.5) & 94.4(2.5) & 93.3(5.0) & \bf 88.2(1.8) & 92.6(0.8) & 93.9(3.2) & 92.2(5.1) & 92.0 \\
                DDACL \cite{yao2020discriminative} & 86.5(1.6) & 91.8(0.9) & 94.2(2.8) & 93.5(3.4) & 86.2(1.9) & 83.1(11.2) & 89.1(5.9) & 92.3(3.9) & 86.2(1.7) & 91.8(1.1) & 93.4(3.6) & 93.6(3.0) & 86.8(1.7) & 92.0(0.8) & 94.4(3.2) & 94.0(3.1) & 90.6\\
                SSAN \cite{li2020simultaneous} & 80.9(8.7)& 89.8(2.8)& \bf 95.8(2.0)& \bf 94.2(2.1)& 84.9(4.7)& 89.0(4.0)& 93.1(3.6)& 93.1(3.1)& 81.0(4.7)& 90.3(1.5)& 93.9(3.6)& 82.6(14.7)& 84.3(2.2)& 86.9(10.0)& 93.5(5.2)& \bf 95.0(2.1)& 89.3 \\
                DAMA+ & 88.1(1.7) & \bf 92.7(0.6) & 93.9(1.7) & 92.2(4.1) & 88.0(1.3) & \bf 92.9(0.6) & 93.9(2.1) & 92.8(4.2) & 87.7(1.9) & \bf 93.2(0.5) & 92.1(5.3) & 94.0(3.3) & 88.1(2.1) & \bf 92.7(0.7) & 94.8(1.6) & 93.5(3.9) & 91.9\\
                CDSPP (Ours) & \bf 88.3(0.7) & 92.3(0.7) & 95.6(1.5) & 94.1(4.1) & 88.1(1.0) & 92.6(0.5) & \bf 95.7(1.0) & \bf 94.6(3.8) & \bf 88.1(0.6) & 92.7(0.5) & 93.5(4.6) & \bf 94.7(3.5) & 88.1(1.0) & 92.5(0.5) & \bf 95.7(1.3) & 94.3(3.8) & \bf 92.6\\

                \hline
            \end{tabular}%
        }
    }
\end{table*}

\begin{table*}[!htbp]
    \centering
    {
        \centering
        \caption[]{Mean(std) of classification accuracy (\%) over ten trials on the Office-Home dataset using VGG16 (source) and ResNet50 (target) features under supervised (denoted by $*$) and semi-supervised settings (each column represents one Source $\to$ Target adaptation task).}
        \label{table:vgg2resnet}
        \resizebox{\columnwidth}{!}{%
            \begin{tabular}{l cccc cccc ccccc cccc}
                \hline
                Method & A$\to$A & A$\to$C & A$\to$P & A$\to$R & C$\to$A & C$\to$C& C$\to$P & C$\to$R & P$\to$A & P$\to$C & P$\to$P & P$\to$R & R$\to$A & R$\to$C & R$\to$P & R$\to$R & Avg \\ \hline
                SVM$_t *$& 51.8(1.2) & 41.4(1.6) & 71.0(1.4) & 65.8(2.3)& 51.8(1.2) & 41.4(1.6) & 71.0(1.4) & 65.8(2.3)& 51.8(1.2) & 41.4(1.6) & 71.0(1.4) & 65.8(2.3)& 51.8(1.2) & 41.4(1.6) & 71.0(1.4) & 65.8(2.3) & 57.5\\
                CDLS\_sup \citep{hubert2016learning} $*$ & 58.7(0.9) & 45.7(1.5) & 75.0(0.8) & 69.8(1.9) & 53.4(1.0) & 48.6(1.0) & 73.9(0.9) & 67.8(1.8) & 55.0(0.9) & 45.9(1.4) & 78.0(0.8) & 70.2(1.5) & 56.5(1.1) & 46.8(1.5) & 76.2(0.5) & 72.4(1.4) & 62.1\\
                DAMA\_sup * & 56.6(2.8) & 43.6(2.2) & 72.0(1.4) & 67.8(2.4) & 42.7(4.8) & 39.8(5.4) & 64.8(5.9) & 57.5(4.5) & 52.4(3.9) & 40.4(4.1) & 70.1(5.7) & 63.6(3.8) & 51.8(3.6) & 42.4(3.4) & 68.8(5.1) & 65.5(4.7) & 56.2\\
                CDSPP\_sup (Ours)  $*$ & 60.8(1.2) & 49.5(1.1) & 76.3(0.8) & 71.9(1.8) & 59.4(1.4) & 50.4(1.0) & 76.1(0.9) & 71.6(1.8) & 59.8(1.2) & 49.6(1.1) & 78.0(0.9) & 72.4(1.4) & 60.4(1.3) & 49.8(0.9) & 76.9(1.0) & 73.3(1.6) & 64.8\\
                \hline
                DAMA \citep{wang2011heterogeneous} & 55.6(3.3) & 43.8(2.1) & 71.1(2.1) & 66.4(3.5) & 43.1(4.7) & 39.3(5.2) & 62.9(5.7) & 56.4(4.7) & 52.1(4.1) & 40.4(4.6) & 69.9(4.3) & 64.3(5.3) & 51.9(3.6) & 42.0(4.3) & 68.3(5.0) & 65.1(4.5) & 55.8\\
                CDLS \citep{hubert2016learning} & 62.1(0.9) & 46.9(1.2) & 76.8(0.7) & 71.5(2.3) & 55.7(1.3) & 47.4(1.2) & 76.7(0.6) & 70.8(2.0) & 56.4(1.1) & 47.0(1.2) & 77.8(0.6) & 71.5(2.0) & 56.7(1.2) & 47.6(1.3) & 77.5(0.4) & 72.2(2.0) & 63.4\\
                PA \citep{li2018heterogeneous} & 59.8(1.2) & 48.2(1.5) & 80.0(1.2) & 75.5(1.8) & 59.8(1.1) & 48.2(1.3) & 80.0(1.3) & 75.4(1.9) & 59.5(1.5) & 48.2(1.4) & 80.0(1.6) & 75.7(1.9) & 59.6(1.3) & 48.2(1.5) & 79.9(1.4) & 75.7(1.8) & 65.8\\
                TIT \citep{li2018transfer} & 55.6(1.0) & 44.7(1.3) & 74.3(1.0) & 70.3(1.8) & 56.1(0.9) & 45.5(1.1) & 74.7(0.7) & 70.2(1.7) & 55.9(1.1) & 45.3(1.3) & 74.9(0.9) & 70.2(1.8) & 55.5(1.5) & 44.6(1.4) & 74.7(0.8) & 69.9(2.0) & 61.4\\
                STN \citep{yao2019heterogeneous} & 62.6(1.4) & 51.2(1.5) & 78.7(3.9) & 74.5(4.3) & 56.1(3.8) & 52.2(2.2) & 77.0(4.0) & 71.1(6.0) & 60.7(1.3) & 49.3(6.0) & \bf 82.4(1.0) & 75.8(2.8) & 61.0(1.3) & 50.6(3.2) & 80.4(0.9) & 75.7(4.4) & 66.2 \\
                DDACL \cite{yao2020discriminative} & 50.3(2.2) & 39.8(2.4) & 59.4(2.8) & 56.1(3.4) & 45.1(2.0) & 36.3(3.0) & 60.9(2.9) & 56.8(2.0) & 40.3(1.5) & 34.2(2.3) & 55.7(9.1) & 43.0(9.9) & 41.9(2.4) & 36.5(2.0) & 52.4(5.1) & 51.5(9.2) & 47.5\\
                SSAN \cite{li2020simultaneous}& 50.5(1.9)& 40.1(3.0)& 70.9(1.8)& 63.9(3.0)& 43.9(2.9)& 42.5(5.0)& 67.8(1.2)& 61.9(2.9)& 44.1(2.6)& 38.1(3.5)& 77.3(0.9)& 66.2(1.3)& 45.7(3.9)& 38.6(3.8)& 71.7(4.0)& 68.8(2.5)& 55.8 \\
                DAMA+ & 62.1(2.4) & 49.0(1.4) & 77.7(1.9) & 75.0(2.5) & 54.0(5.2) & 44.7(6.1) & 75.6(3.7) & 69.0(3.4) & 60.9(2.7) & 46.9(3.1) & 76.9(3.5) & 72.5(1.9) & 60.3(1.9) & 48.6(3.7) & 76.7(2.8) & 73.4(3.3) & 63.9\\
                CDSPP (Ours) & \bf 65.7(1.0) & \bf 54.8(2.0) & \bf 81.0(1.5) & \bf 78.4(1.1) & \bf 65.0(1.4) & \bf 55.1(1.6) & \bf 80.9(1.6) & \bf 78.5(1.2) & \bf 65.6(0.4) & \bf 54.7(1.9) & 81.5(1.1) & \bf 78.8(1.0) & \bf 65.5(0.9) & \bf 54.6(1.6) & \bf 80.9(1.6) & \bf 79.4(0.9) & \bf 70.0\\

                \hline
            \end{tabular}%
        }
    }
\end{table*}

\begin{table*}[!htbp]
    \centering
    {
        \centering
        \caption[]{Mean(std) of classification accuracy (\%) over ten trials on the Office-Home dataset using ResNet50 (source) and VGG16 (target) features under supervised (denoted by $*$) and semi-supervised settings (each column represents one Source $\to$ Target adaptation task).}
        \label{table:resnet2vgg}
        \resizebox{\columnwidth}{!}{%
            \begin{tabular}{l cccc cccc ccccc cccc}
                \hline
                Method & A$\to$A & A$\to$C & A$\to$P & A$\to$R & C$\to$A & C$\to$C& C$\to$P & C$\to$R & P$\to$A & P$\to$C & P$\to$P & P$\to$R & R$\to$A & R$\to$C & R$\to$P & R$\to$R & Avg \\ \hline
                SVM$_t$ * & 40.3(1.4) & 30.5(1.6) & 63.3(1.7) & 56.3(2.9) & 40.3(1.4) & 30.5(1.6) & 63.3(1.7) & 56.3(2.9) & 40.3(1.4) & 30.5(1.6) & 63.3(1.7) & 56.3(2.9) & 40.3(1.4) & 30.5(1.6) & 63.3(1.7) & 56.3(2.9) & 47.6\\
                CDLS\_sup \citep{hubert2016learning} * & 51.4(1.1) & 36.5(1.0) & 69.6(1.1) & 63.5(2.0) & 46.4(1.2) & 39.2(1.0) & 68.7(1.2) & 62.0(1.6) & 47.2(1.2) & 36.4(0.8) & 73.1(1.0) & 64.6(1.9) & 48.6(1.1) & 37.1(1.1) & 70.9(1.2) & 66.4(2.0) & 55.1\\
                DAMA\_sup * & 46.9(1.8) & 35.6(1.8) & 65.9(1.4) & 60.3(1.8) & 43.4(2.4) & 32.5(3.7) & 60.3(6.0) & 56.3(3.0) & 44.1(4.0) & 31.8(3.6) & 62.2(4.0) & 56.4(4.0) & 45.3(3.2) & 34.4(1.6) & 60.9(4.6) & 60.3(2.1) & 49.8\\

                CDSPP (Ours)* & 49.7(1.1) & 39.2(1.0) & 69.5(1.3) & 63.7(2.0) & 48.3(1.2) & 40.4(1.3) & 69.5(1.5) & 63.4(1.8) & 48.5(1.1) & 38.9(0.8) & 71.3(1.4) & 64.1(1.9) & 49.0(1.2) & 39.4(1.1) & 70.1(1.3) & 65.0(2.1) & 55.6\\
                \hline
                DAMA \citep{wang2011heterogeneous} & 46.7(2.0) & 33.6(2.5) & 66.2(1.7) & 57.8(3.4) & 43.1(4.0) & 32.0(4.5) & 60.2(6.2) & 55.7(5.0) & 44.3(3.7) & 32.0(4.1) & 65.5(5.6) & 59.8(3.5) & 45.3(3.4) & 34.8(2.6) & 65.0(4.4) & 60.9(3.5) & 50.2\\

                CDLS \citep{hubert2016learning} & 54.9(1.1) & 36.6(1.1) & 71.1(0.8) & 65.9(1.3) & 47.8(1.4) & 39.8(1.2) & 69.5(1.2) & 63.6(1.4) & 49.7(1.2) & 36.8(1.2) & 75.6(0.8) & 67.9(1.6) & 52.3(1.0) & 38.5(1.3) & 73.1(1.0) & 69.6(1.6) & 57.0\\
                PA \citep{li2018heterogeneous} & 51.4(1.0) & 38.3(1.3) & 73.7(1.2) & 67.4(1.6) & 51.2(1.4) & 38.2(1.2) & 73.6(1.2) & 67.4(1.6) & 51.2(1.1) & 38.1(1.4) & 73.6(1.2) & 67.3(1.9) & 51.2(0.9) & 38.2(1.2) & 73.7(1.2) & 67.4(1.4) & 57.6\\
                TIT \citep{li2018transfer} & 46.8(1.7) & 36.4(1.2) & 69.4(0.9) & 62.5(1.8) & 47.0(1.7) & 36.4(1.1) & 69.3(1.1) & 62.0(2.2) & 46.8(1.7) & 36.4(1.1) & 69.8(0.9) & 62.4(2.1) & 45.9(1.6) & 36.0(1.3) & 69.4(1.2) & 62.5(2.1) & 53.7\\
                STN \citep{yao2019heterogeneous} & 52.6(1.5) & 41.2(2.4) & 74.9(1.0) & 69.2(1.5) & 51.2(1.1) & 42.5(1.2) & 75.3(1.2) & 69.6(1.0) & 53.0(1.2) & 41.7(1.4) & \bf 77.3(1.2) & 70.7(1.4) & 52.7(1.9) & 41.7(1.4) & \bf 76.6(1.0) & 71.6(1.3) & 60.1 \\
                DDACL \cite{yao2020discriminative} & 33.8(2.3) & 27.5(1.6) & 52.2(4.0) & 46.8(1.6) & 31.8(2.3) & 24.3(1.6) & 50.8(1.9) & 44.0(3.5) & 32.0(2.7) & 23.4(2.8) & 49.0(7.9) & 39.9(7.3) & 32.4(2.7) & 24.9(1.6) & 46.5(3.7) & 45.5(4.4) & 37.8\\
                SSAN\cite{li2020simultaneous}& 42.2(4.1)& 30.4(2.3)& 61.9(3.7)& 56.5(2.6)& 37.9(1.6)& 32.3(2.3)& 62.1(1.5)& 53.4(3.4)& 38.1(2.0)& 29.9(1.7)& 69.0(2.9)& 58.0(1.9)& 37.5(2.3)& 29.6(1.7)& 63.3(2.2)& 57.9(3.5)& 47.5 \\
                DAMA+ & 49.1(2.9) & 37.5(1.2) & 71.1(1.5) & 65.4(2.5) & 49.7(1.9) & 32.9(4.1) & 68.3(3.3) & 63.2(3.7) & 48.9(3.1) & 33.3(3.6) & 68.1(2.5) & 61.4(3.9) & 49.9(3.1) & 36.3(2.2) & 67.1(2.4) & 64.6(2.1) & 54.2\\
                CDSPP (Ours) & \bf 55.6(1.1) & \bf 44.7(1.8) & \bf 75.2(1.6) & \bf 71.7(1.4) & \bf 54.5(1.2) & \bf 46.0(1.6) & \bf 75.7(1.6) & \bf 71.4(1.9) & \bf 54.7(1.2) & \bf 45.0(1.6) & 76.0(1.8) & \bf 71.8(1.6) & \bf 55.0(1.3) & \bf 44.9(2.0) & 75.8(1.8) & \bf 72.1(1.8) & \bf 61.9\\

                \hline
            \end{tabular}%
        }
    }
\end{table*}

Table \ref{table:mrc-tag2image} (except the last column) lists the comparison results on the MRC dataset. The baseline method SVM$_t$ achieves an accuracy of 67.0\% using only 10 labelled target domain samples per class for training. The labelled source domain data can benefit the performance with proper domain adaptation but the improvement is marginal for both HFA and our proposed CDSPP. The supervised version of CDLS uses PCA to learn a subspace from the target domain, hence the dimensionality of subspace cannot be higher than $n_t-1$. Due to such limitation, CDLS\_sup performs worse than others when the number of labelled target samples is small which is usually the case for HDA problems.  For the semi-supervised HDA, DAMA and SHFA perform no better than the baseline method SVM$_t$ which was also observed in existing works \citep{hubert2016learning,li2018heterogeneous,li2018transfer}. The best performance (71.7\%) is achieved by PA \citep{li2018heterogeneous} and our proposed CDSPP is marginally worse with the average classification accuracy of 68.9\%. However, when applying PCA to reduce the text features to a lower dimensionality of 50, the performance of CDSPP is improved from 68.9\% to 71.6\%, comparable with the best performance 71.7\% achieved by PA. This demonstrates the fact handcrafted text features (i.e. bag-of-features) used in the MRC dataset contain noisy variables which cannot be well handled by the CDSPP algorithm itself but a pre-processing like PCA suffices to address this issue.

Table \ref{table:mrc-tag2image} (rightmost column) also presents the results of tag-to-image adaptation on the NUS-ImageNet dataset. There is only one adaptation task (i.e. Tag$\to$Image) in this dataset. In the supervised HDA setting, the baseline method SVM$_t$ is outperformed by all three comparative methods with large margins among which HFA achieves the best performance of 67.5\% as opposed to the accuracy of 67.2\%  by our proposed CDSPP\_sup. However, HFA is more computationally expensive than others as discussed in \citep{li2013learning}. In the semi-supervised HDA setting, our method achieves the best performance with an accuracy of 74.7\%. The performance of our CDSPP can be further improved to 76.5\% when PCA is applied to reduce the dimensionality of the text features to 50.

Similar results can also be observed in Table \ref{table:surf2decaf} for the image classification experiments on Office-Caltech. Both HFA and our CDSPP achieve the same average accuracy of 87.8\% in the supervised HDA setting. CDLS\_sup performs worse than the baseline method SVM\_t again due to the restricted PCA dimensions as discussed above. In the semi-supervised HDA, our CDSPP achieves the best results in 6 out of 16 adaptation tasks and has the highest average accuracy of 92.6\%.

The experimental results for the challenging Office-Home dataset are shown in Table \ref{table:vgg2resnet} and Table \ref{table:resnet2vgg}. The difference between these two tables lies in the features used for the source/target domains are VGG16/ResNet50 and ResNet50/VGG16 respectively. In this experiment, the methods HFA and SHFA are excluded due to their extremely long computation time given the scale of this dataset. It can be seen that CDLS\_sup, for the first time, outperforms the baseline method SVM$_t$ on this dataset since the total number of labelled target samples is 195 which no longer restricts the PCA dimension in this algorithm. Two more recent approaches DDACL \cite{yao2020discriminative} and SSAN \cite{li2020simultaneous}, however, perform poorly on this more challenging dataset although they achieve good performance on three simpler datasets. One reasonable explanation is that these two approaches along with many others benefit from the clustering characteristics of the original features and can easily recognize the target samples cluster-wisely. For the more challenging dataset, the classes are prone to overlap in a low-dimensional subspace if the projections are not properly learned. The simultaneous learning of the classifier and feature projections tends to result in an overfitted classifier to the labelled and pseudo-labelled samples and the overfitting can be an issue when the labelled target samples cannot represent the distribution of their corresponding classes in the subspace. As a result, they suffer from negative adaptation when the pseudo-labels are inaccurate at the beginning and less robustness to the choice of labelled target samples. This also provides evidence for the necessity of new test beds for HDA approaches.  In both tables, the best performances were achieved by our CDSPP for most adaptation tasks in both supervised and semi-supervised settings. Specifically, CDSPP achieves an average accuracy of 70.0\% when VGG16 and ResNet50 features were employed for source and target domains, significantly better than the second-best performance 66.2\% achieved by STN \citep{yao2019heterogeneous}. Similar results can be observed in Table \ref{table:resnet2vgg}, CDSPP achieves the best performance of 61.9\% as opposed to the second-best 60.1\% by STN \citep{yao2019heterogeneous}. The significant performance improvement gained by CDSPP on the Office-Home dataset is attributed to the fact this dataset is much more challenging than other datasets since it contains much more classes (65 vs 10, 8, 6). We believe Office-Home is a more appropriate testbed for discriminating different HDA methods.

In addition, the performance comparison between DAMA and DAMA+ provide further evidence that the use of the iterative learning framework described in Section \ref{sec:shda} is beneficial to semi-supervised HDA. On the other hand, the superior performance of CDSPP to DAMA+ across all datasets validates the fact that our CDSPP is essentially different from DAMA as discussed in Section \ref{sec:recognition}. In the supervised HDA experiments, CDSPP also outperforms our adaptation of DAMA consistently on four datasets and the performance gap on the challenging Office-Home dataset is particularly significant. The other interesting phenomenon that can be observed from Tables \ref{table:surf2decaf}-\ref{table:vgg2resnet} is the semi-supervised DAMA (i.e. the original version in \cite{wang2011heterogeneous}) performs no better than its supervised version (i.e. DAMA\_sup adapted by ourselves). This demonstrates that the way how DAMA \cite{wang2011heterogeneous} exploits the unlabelled target-domain data is ineffective. By contrast, the selective pseudo-labelling strategy employed in our proposed CDSPP is more effective and can be readily used by other HDA algorithms.

\subsection{On the Number of labelled Target Samples}
We conducted additional experiments of semi-supervised HDA to compare our proposed CDSLPP with other methods when different numbers of labelled target samples were used for training. Specifically, we set the number of labelled target samples as 5, 10, 15 or 20 for the MRC dataset whilst for the other three datasets the investigated numbers of labelled target samples were within the collection of $\{1, 3, 5, 7, 9\}$. For the MRC and NUS-ImageNet datasets, all adaptation tasks (i.e. $EN/FR/GE/IT \to SP$ and $Tag\to Image$, respectively) were repeated for ten trials with randomly selected data (the same as those used in the previous experiment). To save computational time without loss of generality, we only conducted the first four adaptation tasks for the first three trials for the Office-Caltech ($C\to C, C\to A, C\to D, C\to W$) and Office-Home ($A\to A, A\to C, A\to P, A\to R$ 
with VGG16 and ResNet50 as the source and target features, respectively) datasets in this experiment. For each dataset, the average classification accuracy over all the conducted adaptation tasks in this dataset is reported for comparison.

\begin{figure*}
    \centering
    {\includegraphics[width=1\textwidth]{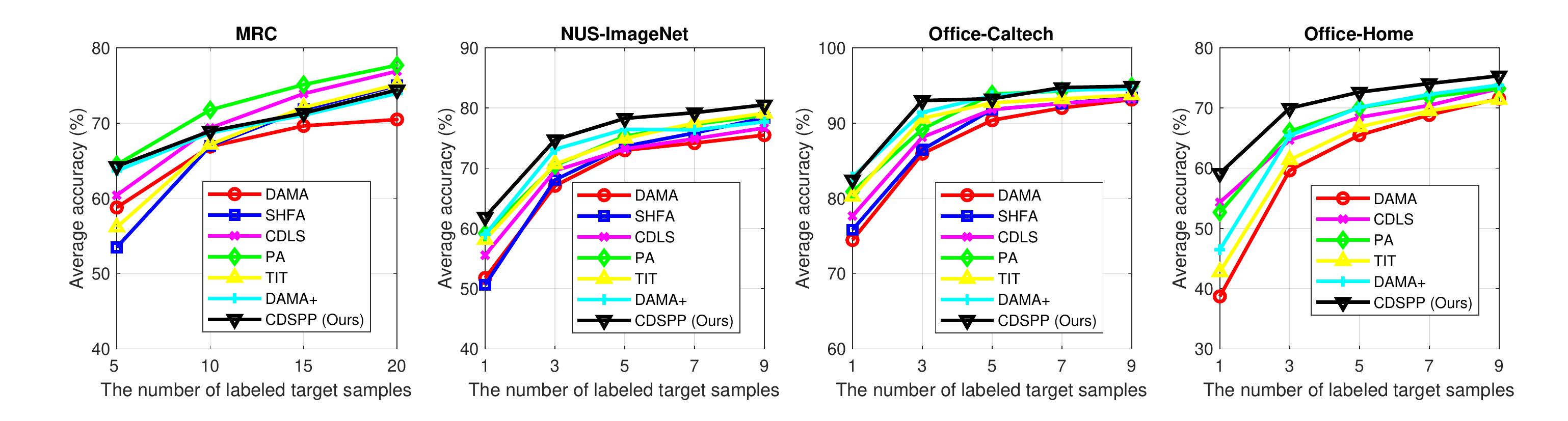}}
    {\caption{Comparison results when different numbers of labelled target samples are used. }
        \label{fig:numOfLTS}}
\end{figure*}

The experimental results are shown in Figure \ref{fig:numOfLTS} from which we can draw some conclusions. (1) The performance of all methods is improved with the increase of labelled target samples since more labelled target samples provide additional information for the training. (2) The performance margins between different methods decrease when more labelled target samples are used for training. This phenomenon demonstrates these methods have different capabilities of cross-domain knowledge transfer which is of vital importance when there are limited labelled data in the target domain. (3) Our proposed CDSPP algorithm outperforms the others in three out of four datasets regardless of the number of labelled target samples. The superiority of CDSPP to other methods is more significant when less labelled target samples are available. (4) On the MRC dataset, our method performs the best when 5 labelled target samples are used but outperformed by CDLS \citep{hubert2016learning} and \citep{li2018heterogeneous} when more labelled target samples are available.

\subsection{On the Effect of Hyper-parameters}

In all our experiments described above, we empirically set the dimensionality of the common subspace $d$ equal to the number of classes in the dataset and set the hyper-parameters $\alpha=10$ (c.f. Eq.(\ref{eq:eig})) and the number of iterations $T=5$ (c.f. Algorithm \ref{alg:hda}). In this experiment, we will show how these values were selected and the fact that our algorithm is not sensitive to these hyper-parameters across all the datasets. Similar to the experimental settings in the previous section, we repeated all the adaptation tasks for ten trials for the MRC and NUS-ImageNet datasets and repeated the first four adaptation tasks for the first three trials for the Office-Caltech and Office-Home datasets to save time without loss of generality. The average accuracy over all the investigated adaptation tasks is reported for each dataset when a specific hyper-parameter value is used. 

Firstly, we investigate the effect of the subspace dimension $d$. The values of $d$ were from the set $\{ 2, 4, 6, 8, 10, 16, 32, 64/65, 128, 256, 512 \}$ which contains the class numbers of four datasets (i.e. 6, 8, 10 and 65) as well as other candidate values less or greater than the class numbers. The experimental results are shown in the left graph of Figure \ref{fig:sensitivity}. It is not hard to see that the best performance can be achieved when the value of $d$ is no less than the number of classes in each dataset. A greater value of $d$ does not further improve the performance but a smaller value of $d$ leads to a significant performance drop. 
As a result, it is easy to select an optimal value of the subspace dimension for our proposed CDSPP.

Subsequently, We investigate the effect of the regularization parameter $\alpha$ in Eq.(\ref{eq:eig}) by conducting experiments with the values of $\alpha$ selected from $\{0.01, 0.1, 1, 10, 100, 1000\}$. The experimental results are shown in the middle graph of Figure \ref{fig:sensitivity} from which we can see that the optimal values of $\alpha$ should be between 10 and 100 across all datasets. A smaller value of $\alpha$ leads to performance drops for all datasets except Office-Caltech. This validates the necessity of the regularization term in Eq.(\ref{eq:eig}) in our method and it is not very sensitive to the value of $\alpha$. Similar findings have been validated in the traditional LPP algorithm by \citet{wang2017zero}.

Finally, we are concerned about the number of iterations $T$ by setting $T=\{1,3,5,7,9,11,15,21\}$. The right-side graph in Figure \ref{fig:sensitivity} shows that the CDSPP algorithm performs generally well when $T\geq 5$. Increasing the number of iterations further can only improve the performance on the NUS-ImageNet dataset very marginally but will increase the computational cost significantly. As a result, we selected $T=5$ as the optimal value in all our experiments.

\begin{figure*}
    \centering
    {\includegraphics[width=1\textwidth]{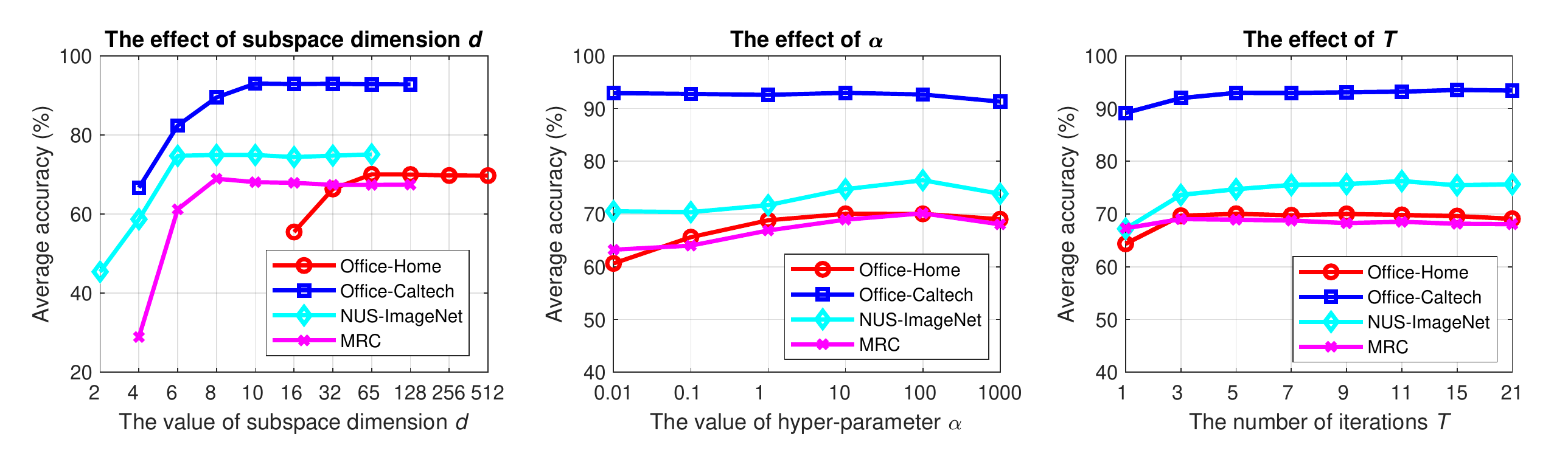}}
    {\caption{Performance sensitivity to hyper-parameters. }
        \label{fig:sensitivity}}
\end{figure*}

\subsection{Qualitative Evaluation} \label{sec:qualitative}
To give an intuitive explanation of how our algorithm can align two heterogeneous domains progressively, we take the tag-to-image adaptation task in the NUS-ImageNet dataset as an example and visualise the distribution of samples in the learned subspace. As shown in Figure \ref{fig:visualisation}(a), the original features from the two domains are independent of each other although the clustering characteristics are evident. Figure \ref{fig:visualisation}(b) illustrates how the three labelled target samples (``circles") are pulled closer to the corresponding source classes (``squares") after the first iteration of CDSPP. More importantly, due to the property of structure preservation of CDSPP, the unlabelled target samples (``crosses") are also moving towards their corresponding source clusters. In Figure \ref{fig:visualisation}(c), we can see more target samples are pseudo-labelled (``crosses" within ``circles") and the source and target domains are further aligned. Such progressive pseudo-labelling and domain alignment are enhanced in Figure \ref{fig:visualisation}(d) and no significant improvement can be observed in the following iterations (e) and (f). This is consistent with the recognition results achieved by our CDSPP in this particular experiment (i.e. from the first to the fifth iteration, recognition accuracy is 70.1\%, 76.7\%, 79.1\%, 78.9\% and 79.0\%, respectively).

It is obvious that the clustering of eight classes has converged after the third iteration and the two domains are relatively well aligned. The samples which are misclassified in the final iteration are those located in the overlapping regions of two classes. The overlap comes from the original features as shown in Figure \ref{fig:visualisation}(a) and can be mitigated in different ways. The best way is to extract more discriminative features to avoid such distribution overlap from the beginning which, however, is beyond our focus of this paper. Alternatively, one can use a more capable domain adaptation algorithm such as our proposed CDSPP to mitigate the class overlap by learning the most discriminative features from the original ones. In addition, the choice of labelled target samples also makes a difference. Taking a closer look at Figure \ref{fig:visualisation}(a), we can see one of the three randomly selected labelled target samples for class 5 is far away from the target cluster of class 5. When this outlier is pulled closer to the source cluster of class 5, some samples from class 2 and class 6 are also mistakenly pulled close to the source cluster of class 5 as shown in Figure \ref{fig:visualisation}(b). These observations also imply it is important to choose the most representative target samples to label for improved performance in practice.

\begin{figure*}[ht!]
    \centering
    \includegraphics[width=\textwidth]{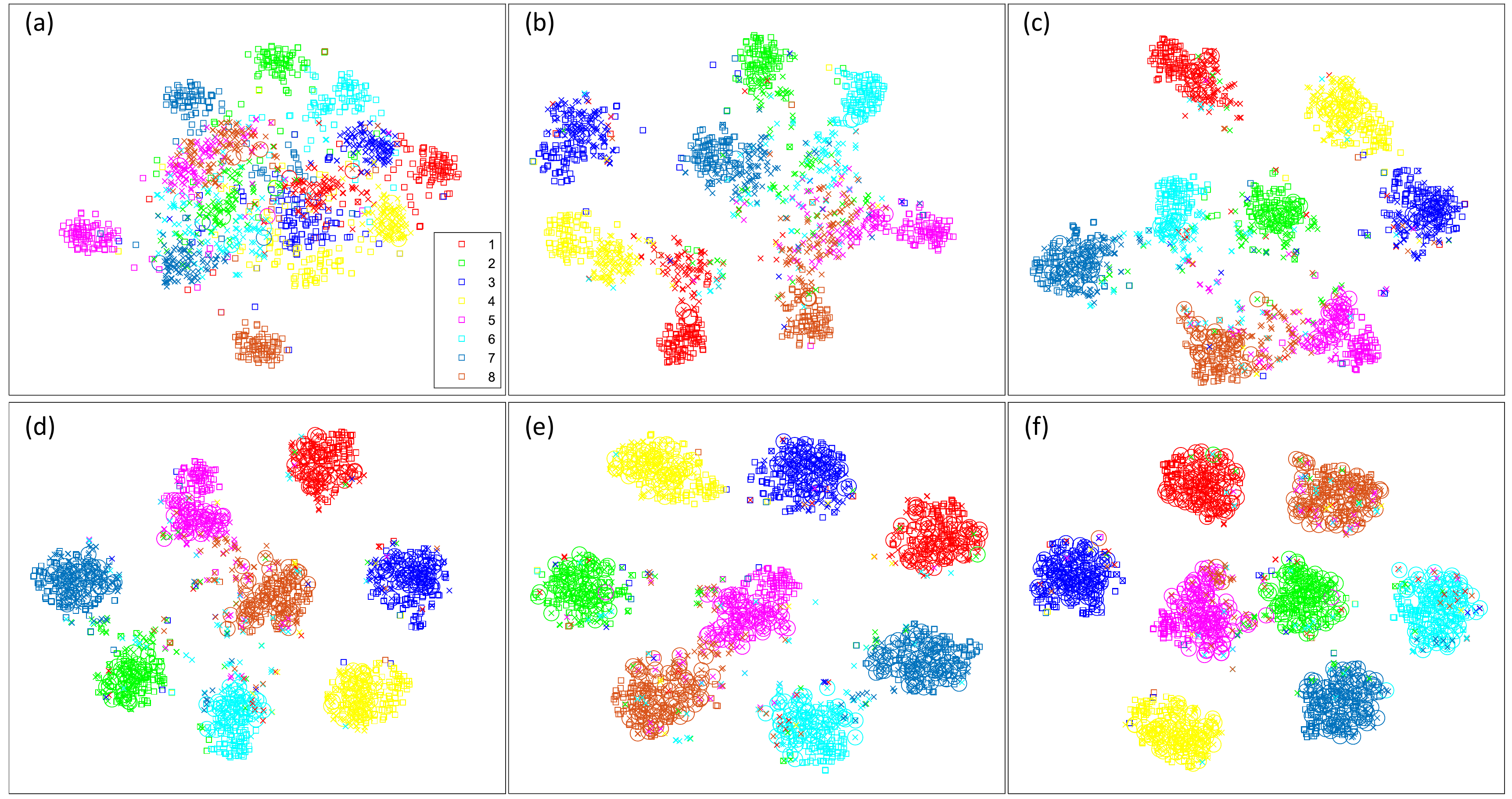}
    \caption{Visualisation of the learned subspace for the NUS-ImageNet dataset (i.e. the tag to image adaptation task) using the proposed CDSPP, best view in colour. (Results are from one of the ten trials with a specific random seed; eight classes 1-8 are represented by different colours; ``squares": labelled source samples; ``crosses": unlabelled target samples; ``circles": labelled or pseudo-labelled target samples; (a) the original features learned by two separate PCA projections independently; (b)-(f) projections in the subspace learned by CDSPP after 1st-5th iteration.)}
    \label{fig:visualisation}
\end{figure*}

\subsection{On the Computational Efficiency}\label{sec:comptime}
We compare the computational efficiency of different methods by calculating the time cost of each method in the experiments. The experiments are conducted on a laptop with an Intel Core i5-7300HQ CPU @ 2.5 GHz and 32 GB memory. For neural network based methods STN and SSAN, the Nvidia Titan Xp GPUs are used. The results are shown in Table \ref{table:time}. The computational time is calculated by averaging the time for all adaptation tasks (i.e. 4, 1, 16 and 16 tasks for MRC, NUS-ImageNet, Office-Caltech and Office-Home respectively) over three trials. By comparison, our proposed CDSPP is generally the most efficient method on three out of four datasets. The exception on Office-Caltech is because CDLS and TIT use dimensionality reduction such as PCA to reduce the dimensionality of Decaf features from 4096 to a much lower value whilst our CDSPP uses the original 4096-dimensional features. From Table \ref{table:time} we can also see different methods have the varying capability of scaling to larger datasets (e.g., from NUS-ImageNet to Office-Home) in terms of both feature dimensionality and the number of samples. In particular, SHFA takes an excessively long time before completing one single adaptation task of Office-Home in our experiment hence is marked as $Inf$ in the table. STN and SSAN take the most time across all datasets since neural networks are trained for a large number of iterations which is generally much less efficient compared with our CDSPP which can be solved by eigen-decomposition.

\begin{table}[!t]
    \centering
    {
        \centering
        \caption[]{Computation time (s) of different methods on four datasets (the total time of all adaptation tasks in each dataset is calculated).
        }
        \label{table:time}
        \resizebox{0.9\columnwidth}{!}{%
            \begin{tabular}{lrrrr}
                \hline
                Method & MRC & NUS-ImageNet & Office-Caltech & Office-Home \\ \hline
                DAMA \citep{wang2011heterogeneous}  & 46 & 7 & 58 & 477\\
                SHFA \citep{li2013learning}& 917 & 25 & 255 & Inf \\
                CDLS \citep{hubert2016learning} & 168 & \bf 6 & \bf 47 & 272\\
                PA \citep{li2018heterogeneous}& 617 & 30 & 121 & 3991\\
                TIT \citep{li2018transfer} & 175 & 11 & 52 & 1740\\
                STN \citep{yao2019heterogeneous} & 2734 & 343 & 7134 & 40857 \\
                DDACL \citep{yao2020discriminative} & 622 & 169 & 2940 & 3421 \\
                SSAN \citep{li2020simultaneous} & 9520 & 1229 & 13245 & 47145 \\
                DAMA + & 49 & 21 & 288 & 1390\\
                CDSPP (Ours)  & \bf 16 & 7 & 161 & \bf 256\\
                \hline
            \end{tabular}%
        }
    }
\end{table}

\section{Conclusion and Future Work} \label{sec:conclusion}
We propose a novel algorithm CDSPP for HDA and extend it to the semi-supervised setting by incorporating it into an iterative learning framework. Experimental results on several benchmark datasets demonstrate the proposed CDSPP is not only computationally efficient but also can achieve state-of-the-art performance on four datasets. We also investigate the effect of the number of labelled target samples in the performance of different methods and found that the use of too many labelled target samples will suppress the performance distinction among different methods. The newly introduced benchmark dataset Office-Home for HDA is proved a proper testbed for HDA since it is more challenging with much more classes than others and the performances of investigated methods on this dataset are more significantly varied. In addition, the proposed method for HDA is not sensitive to hyper-parameters and it is easy to select optimal hyper-parameter values across varying datasets. 

One limitation of the proposed method is that its performance relies on the quality of pre-extracted features. As we have observed in our experiments on the MRC dataset, proper pre-processing of features can affect the domain adaptation performance significantly. One direction of future work to address this issue is to unify the feature extraction neural networks and domain adaptation.  For HDA, the source and target domains are different either in the data modality (e.g., text and image) or in the feature space. As a result, two individual neural networks are needed for feature extraction before feeding the features into the domain adaptation module. Our selective pseudo-labelling strategy described in this paper can also be easily applied to exploit the unlabelled target-domain data when training the unified neural networks for HDA.
\bibliographystyle{apa}	

\bibliography{mybibfile}  
\end{document}